\title{
    Finding All Bayesian Network Structures within a Factor of Optimal
}
\author{
    Zhenyu A. Liao\textsuperscript{1},
    Charupriya Sharma\textsuperscript{1},
    James Cussens\textsuperscript{2}
    and Peter van Beek\textsuperscript{1}\\~\\
\begin{tabular}{*{2}{>{\centering}p{.5\textwidth}}}
\textsuperscript{1}David R. Cheriton School of Computer Science & \textsuperscript{2}Department of Computer Science \tabularnewline
University of Waterloo & University of York \tabularnewline
Waterloo, ON Canada & York, United Kingdom \tabularnewline
\url{{z6liao,c9sharma,vanbeek}@uwaterloo.ca} & \url{james.cussens@york.ac.uk} 
\end{tabular}
}
\newcolumntype{L}{>{\raggedright\arraybackslash}X}
\newcommand{\yc}{\cellcolor{yellow}}
\newcommand{\opt}{\mathit{OPT}}
\newcommand{\graph}{\mathit{G}}
\newcommand{\graphset}{\mathcal{G}}
\newcommand{\vertex}[1]{V_{#1}}
\newcommand{\vertices}{\mathit{V}}
\newcommand{\edges}{\mathit{E}}
\newcommand{\parents}{\Pi}
\newcommand{\cost}{\mathit{score}}
\newcommand{\problem}{\mathit{\epsilon}\text{BNSL}}
\newcommand{\ceil}[1]{\lceil #1 \rceil}
\newcommand{\varI}{I}
\newcommand{\gobnilp}{GOBNILP}
\newcommand{\score}[2]{\sigma_{#1}({#2})}
\newcommand{\pen}[2]{t_{#1}({#2})}
\newtheorem{theorem}{Theorem}
\newenvironment{customthm}[1]
  {\innercustomthm}
  {\endinnercustomthm}
\newenvironment{customlem}[1]
  {\innercustomlem}
  {\endinnercustomlem}
\newenvironment{customcor}[1]
  {\innercustomcor}
  {\endinnercustomthm}
\newtheorem{corollary}{Corollary}
\newtheorem{lemma}{Lemma}
\newtheorem{definition}{Definition}
\begin{document}

\maketitle

\begin{abstract}
A Bayesian network is  a widely used probabilistic graphical model with
applications in knowledge discovery and prediction.  Learning a
Bayesian network (BN) from data can be cast as an optimization
problem using the well-known score-and-search approach.
However, selecting a single model (i.e., the best scoring BN)
can be misleading or may not achieve the best possible accuracy.
An alternative to committing to a single model is to perform
some form of Bayesian or frequentist model averaging, where
the space of possible BNs is sampled or enumerated in some
fashion. Unfortunately, existing approaches for model averaging
either severely restrict the structure of the Bayesian network
or have only been shown to scale to networks with fewer than 30 random
variables. In this paper, we propose a novel approach to model
averaging inspired by performance guarantees in approximation
algorithms. Our approach has two primary advantages. First,
our approach only considers \emph{credible} models in that they
are optimal or near-optimal in score. Second, our approach
is more efficient and scales to significantly larger
Bayesian networks than existing approaches.
\end{abstract}

\section{Introduction}\label{sec:intro}

A Bayesian network is a widely used probabilistic graphical
model with applications in knowledge discovery, explanation,
and prediction \cite{Darwiche09,KollerF09}.
A Bayesian network (BN) can be learned from data
using the well-known \emph{score-and-search} approach, where a
scoring function is used to evaluate the fit of a proposed BN
to the data, and the space of directed acyclic graphs (DAGs)
is searched for the best-scoring BN.
However, selecting a single model (i.e., the best-scoring BN)
may not always be the best choice.
When one is using BNs for knowledge discovery and explanation
with limited data, selecting a single model may be
misleading as there may be many other BNs that have scores that
are very close to optimal and the posterior probability of
even the best-scoring BN is often close to zero. As well, when one is using
BNs for prediction, selecting a single model may not achieve
the best possible accuracy. 

An alternative to committing to a single model is to
perform some form of Bayesian or frequentist model averaging 
\cite{Claeskens2008,HoetingMRV1999,KollerF09}.
In the context of knowledge discovery, Bayesian model averaging
allows one to estimate, for example, the posterior probability
that an edge is present, rather than just knowing whether the
edge is present in the best-scoring network.
Previous work has proposed
Bayesian and frequentist model averaging approaches to network structure
learning that enumerate the space of all possible DAGs \cite{KoivistoS04}, sample from the space of all
possible DAGs \cite{HeTW2016,MadiganR1994}, consider
the space of all DAGs consistent with a given ordering of
the random variables \cite{Buntine91,DashC2004},
consider the space of tree-structured or other restricted
DAGs \cite{MadiganR1994,Meila2000}, and consider
only the $k$-best scoring DAGs for some given value of $k$
\cite{ChenCD2015,ChenCD2016,chen2018pruning,ChenT2014,HeTW2016,TianHR10}.
Unfortunately, these existing approaches either severely
restrict the structure of the Bayesian network, such as
only allowing tree-structured networks or only considering a
single ordering, or have only been shown to scale to small
Bayesian networks with fewer than 30 random variables.

In this paper, we propose a novel approach to
model averaging for BN structure learning that is inspired
by performance guarantees in approximation algorithms. Let
$\opt$ be the score of the optimal BN and assume 
without loss of generality that the optimization problem
is to find the minimum-score BN.
Instead of
finding the $k$-best networks for some fixed value of $k$,
we propose to find all Bayesian networks $\mathcal{G}$ that
are within a factor $\rho$ of optimal; i.e.,
\begin{equation}
\label{EQUATION:factor}
\opt \le \cost(\mathcal{G}) \le \rho \cdot \opt,
\end{equation}
for some given value of $\rho \ge 1$, or equivalently,
\begin{equation}
\label{EQUATION:factor_add}
\opt \le \cost(\mathcal{G}) \le \opt+\epsilon,
\end{equation}
for $\epsilon = (\rho - 1) \cdot \opt$. Instead of choosing arbitrary 
values for $\epsilon$, $\epsilon \ge 0$, we show that for the 
two scoring functions BIC/MDL and BDeu, a good choice for the value of $\epsilon$ is 
closely related to the Bayes factor, a model selection criterion 
summarized in \cite{kass1995bayes}.

Our approach has two primary advantages. First, our approach
only considers \emph{credible } models in that they are
optimal or near-optimal in score. Approaches that enumerate
or sample from the space of all possible models consider DAGs
with scores that can be far from optimal; for example,
for the BIC/MDL scoring function the ratio of worst-scoring
to best-scoring network can be four or five orders of
magnitude\footnote{Madigan and Raftery \shortcite{MadiganR1994}
deem such models \emph{discredited} when they make a similar
argument for not considering models whose probability is
greater than a factor from the most probable.}. A
similar but more restricted case can be made against the
approach which finds the $k$-best networks since there is no 
\emph{a priori} way to know how to set the parameter $k$ such that only
credible  networks are considered. Second, and perhaps most
importantly, our approach is significantly more efficient and
scales to Bayesian networks with almost 60 random variables. Existing
methods for finding the optimal Bayesian network structure, e.g., \cite{BartlettC13,vanBeek2015}
rely heavily for their success on a significant body of
pruning rules that remove from consideration many candidate
parent sets both before and during the search. We show that many 
of these pruning rules can be
naturally generalized to preserve the Bayesian networks that are 
within a factor of optimal. 
We modify GOBNILP \cite{BartlettC13}, a state-of-the-art method for finding
an optimal Bayesian network, to implement our generalized pruning rules
and to find all \emph{near}-optimal networks. We show in an experimental
evaluation that the modified GOBNILP scales to significantly
larger networks without resorting to restricting the structure of the Bayesian networks that are learned.

\section{Background}\label{sec:background}

In this section, we briefly review the necessary background in
Bayesian networks and scoring functions, and define the Bayesian network structure
learning problem (for more background on these topics see \cite{Darwiche09,KollerF09}).

\subsection{Bayesian Networks}

A Bayesian network (BN) is a probabilistic graphical model that
consists of a labeled directed acyclic graph (DAG), $\graph = (\vertices, \edges)$ in which the vertices $\vertices = \{V_{1}, \ldots, V_{n}\}$
correspond to $n$ random variables, the edges $E$ represent direct
influence of one random variable on another, and each vertex
$\vertex{i}$ is labeled with a conditional probability
distribution $P(V_{i} \mid \Pi_{i})$ that
specifies the dependence of the variable $V_{i}$ on its
set of parents $\Pi_i$ in the DAG. A BN can
alternatively be viewed as a factorized representation of the
joint probability distribution over the random variables and
as an encoding of the Markov condition on the nodes; i.e., given its parents, every variable is conditionally independent of its non-descendents.

Each random variable $V_i$ has state space $\Omega_i$ = $\{v_{i1}$, \ldots , $v_{i{r_i}}\}$, where $r_i$ is the cardinality of $\Omega_i$ and typically $r_i\geq 2$. Each $\parents_{i}$ has state space $\Omega_{\parents_{i}} = \{\pi_{i1},\ldots,\pi_{i{r_{\parents_{i}}}}\}$.  We use $r_{\parents_{i}}$ to refer to the number of possible instantiations of the parent set $\parents_{i}$ of $V_i$ (see Figure \ref{fig:notation}). 
The set $\theta = \{ \theta_{ijk} \}$ for all $ i = \{1,\ldots, n \}, j = \{1,\ldots,r_{\parents_{i}}\}$ and  $k = \{1,\ldots,r_i\}$ represents parameters in $G$ where each element in $\theta$, $\theta_{ijk} = P(v_{ik} \mid \pi_{ij})$.

\begin{figure}[htbp]
\begin{center}
\begin{tikzpicture}[every node/.style={circle, draw, scale=1.2, fill=gray!50}, scale=1.0, rotate = 180, xscale = -1]

\node [fill=white](4) at ( 7,6) {E};
\node [fill=white](5) at ( 9,6) {F};
\node [fill=white](6) at ( 8,5) {B};
\node [fill=white](7) at ( 11,6) {G};
\node [fill=white](8) at ( 13,6) {H};
\node [fill=white](9) at ( 12,5) {C};
\node [fill=white](10) at ( 10,4) {A};

\draw [<-, line width=1pt] (4) -- (6) ;
\draw [<-, line width=1pt] (7) -- (6) ;
\draw [<-, line width=1pt] (5) -- (6) ;
\draw [<-, line width=1pt] (7) -- (9) ;
\draw [<-, line width=1pt] (8) -- (9);
\draw [<-, line width=1pt] (7) -- (9) ;
\draw [<-, line width=1pt] (8) -- (9);
\draw [<-, line width=1pt] (6) -- (10) ;
\draw [<-, line width=1pt] (9) -- (10) ;

\end{tikzpicture}
\caption{Example Bayesian network: Variables $A,B,F$ and $G$ have the state space $\{0,1\}$. The variables $C$ and $E$ have state space $\{0,1,3\} $ and $H$ has state space $\{2,4\}$ 
	Thus $r_A = r_B = r_F = r_G = 2$, $r_C= r_E = 3$ and $r_H = 2$.  Consider the parent set of $G$, $\parents_G = \{B,C\}$ The state space of $\parents_G$ is $\Omega_{\parents_G} = \{ \{0,0\}, \{0,1\}, \{0,3\}, \{1,0\}, \{1,1\}, \{1,3\} \}.$ and $r_{\parents_G} = 6$. }
\label{fig:notation}
\end{center}
\end{figure}
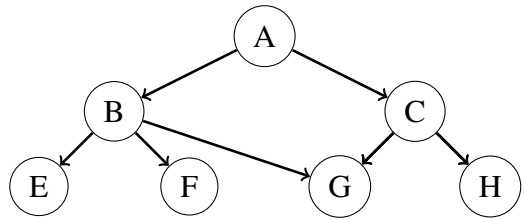

The predominant method for Bayesian network structure learning (BNSL) from data is
the \emph{score-and-search} method.
Let 
$I =
\{I_1, \ldots, I_N\}$ be a dataset where each instance $I_i$ is an $n$-tuple that is a
complete instantiation of the variables in $\vertices$. A \emph{scoring
function} $\sigma( \graph \mid I )$ assigns a real value measuring 
the quality of $\graph=(\vertex,\edges)$ given the data $I$.
Without loss of generality, we assume that a lower score
represents a better quality network structure and omit $I$ when the data is clear from context. 

\begin{definition}
Given a non-negative constant $\epsilon$ and a dataset $I = \{I_1, \ldots, I_N\}$, a \textbf{credible  network} $G$ is a network that has a score $\score{}{\graph}$ such that $\opt \leq \score{}{\graph} \leq \opt + \epsilon$, where $\opt$ is the score of the optimal Bayesian network.  
\end{definition}

In this paper, we focus on solving a problem we call the $\epsilon$-Bayesian Network Structure Learning ($\problem$). Note that the BNSL for the optimal network(s) is a special case of $\problem$ where $\epsilon=0$.
\begin{definition}
Given a non-negative constant $\epsilon$, a dataset $I = \{I_1, \ldots, I_N\}$ 
over random variables $\vertices$ = $\{V_{1}$, \ldots, $V_{n}\}$ and a scoring 
function $\sigma$, the $\epsilon$-Bayesian Network Structure Learning ($\problem$) 
problem is to find all credible networks.
\end{definition}

\subsection{Scoring Functions}

Scoring functions usually balance goodness of fit to
the data with a penalty term for model complexity
to avoid overfitting. Common scoring functions
include BIC/MDL \cite{LamB94,Schwarz78} and BDeu
\cite{Buntine91,HeckermanGC95}. An important property of these
(and most) scoring functions is decomposability,
where the score of the entire network
$\sigma( \graph )$
can be rewritten as the sum of local scores associated to each vertex
$\sum_{i = 1}^{n} \sigma(\vertex{i},\parents_i )$
that only depends on $\vertex{i}$ and its parent set $\parents_{i}$ in
$\graph$. The local score is abbreviated below as $\score{}{\parents_i}$  when the local node $\vertex{i}$ is clear from context.
Pruning techniques can be used to reduce the
number of candidate parent sets that need to be considered,
but in the worst-case the number of candidate parent sets
for each variable $\vertex{i}$ is exponential in $n$, where $n$
is the number of vertices in the DAG.

In this work, we focus on the Bayesian Information Criterion (BIC) and the Bayesian Dirichlet, specifically BDeu, scoring functions. The BIC scoring function in this paper is defined as,
\begin{equation*}
    BIC : \score{}{\graph} = \max_{\theta} L_{G,I} (\theta) - t({G})\cdot w.
\end{equation*}
Here, $w = \frac{\log N}{2}$, $t(G)$ is a penalty term and $L_{G,I} (\theta)$ is the log likelihood, given by,
\begin{equation*}
    L_{G,I} (\theta) = \displaystyle  \sum_{i=1}^n \sum_{j=1}^{r_{\Pi_i}} \sum_{k=1}^{r_i} \log \theta_{ijk}^{n_{ijk}} ,
\end{equation*}
where $n_{ijk}$ is the number of instances in $I$ where $v_{ik}$ and $\pi_{ij}$ co-occur.
As the BIC function is decomposable, we can associate a score to $\Pi_i$, a candidate parent set of $\vertex{i}$ as follows,
\begin{equation*}
    BIC : \score{}{\Pi_i} = \max_{\theta_i} L(\theta_i) - t({\Pi_i})\cdot w.
\end{equation*}
Here, $L (\theta_i) =\sum_{j=1}^{r_{\Pi_i}}\sum_{k=1}^{r_i} n_{ijk} \log \theta_{ijk}$ and $t(\Pi_i) = r_{\Pi_i}(r_i -1)$.
The BDeu scoring function in this paper is defined as,
\begin{align*}
    BDeu : \score{}{G} &= \displaystyle  \sum_{i=1}^n \sum_{j=1}^{r_{\Pi_i}} \log \frac{\Gamma (\alpha)}{ \Gamma(\alpha + n_{ij})} \\&+  \sum_{i=1}^n \sum_{j=1}^{r_{\Pi_i}} \sum_{k=1}^{r_i} \log \frac{\Gamma (\frac{\alpha}{r_i}+ n_{ijk})}{\Gamma(\frac{\alpha}{r_i})},
\end{align*}
where $\alpha$ is the equivalent sample size and $n_{ij} = \sum_k n_{ijk}$.
As the BDeu function is decomposable, we can associate a score to $\Pi_i$, a candidate parent set of $\vertex{i}$ as follows,
\begin{align*}
    BDeu : \score{}{\Pi_i} &= \displaystyle  \sum_{j=1}^{r_{\Pi_i}} \Bigg(\log \frac{\Gamma (\alpha)}{ \Gamma(\alpha + n_{ij})} \\&+ \sum_{k=1}^{r_i} \log \frac{\Gamma (\frac{\alpha}{r_i}+ n_{ijk})}{\Gamma(\frac{\alpha}{r_i})} \Bigg).
\end{align*}

\section{The Bayes Factor}\label{sec:bf}

In this section, we show that a good choice for the value of $\epsilon$ for the
$\problem$ problem is closely related to the 
Bayes factor (BF), a model selection criterion summarized in (Kass and Raftery 1995).

The BF was proposed by Jeffreys as an alternative to significance test \cite{jeffreys1967theory}. It was thoroughly examined as a practical model selection tool in \cite{kass1995bayes}. Let $\graph_0$ and $\graph_1$ be DAGs (BNs) in the set of all DAGs $\graphset$ defined over $\vertex{}$. The BF in the context of BNs is defined as,
$$
BF(\graph_0,\graph_1)=\frac{P(\varI \mid \graph_0)}{P(\varI \mid \graph_1)},
$$
namely the odds of the probability of the data predicted by network $\graph_0$ and $\graph_1$. The actual calculation of the BF often relies on  Bayes' Theorem as follows,
$$
\frac{P(\graph_0 \mid \varI)}{P(\graph_1 \mid \varI)}=\frac{P(\varI \mid \graph_0)}{P(\varI \mid \graph_1)}\cdot\frac{P(\graph_0)}{P(\graph_1)}=\frac{P(\varI,\graph_0)}{P(\varI,\graph_1)}.
$$
Since it is typical to assume the prior over models is uniform in $\problem$, the BF can then be obtained using either $P(\graph \mid \varI)$ or $P(\varI,\graph)\forall\graph\in\graphset$. We use those two representations to show how BIC and BDeu scores relate to the BF.

Using Laplace approximation and other simplifications in \cite{ripley1996pattern}, Ripley derived the following approximation to the logarithm of the marginal likelihood for network $\graph$ (a similar derivation is given in \cite{Claeskens2008}),
\begin{align*}
\log{P(\varI \mid \graph)} =& L_{G,I} (\hat{\theta}) - t({G})\cdot \frac{\log N}{2}+t({G}) \cdot \frac{\log{2\pi}}{2}\\
&-\frac{1}{2}\log{|J_{G,I} (\hat{\theta})|} + \log{P(\hat{\theta} \mid \graph)},
\end{align*}
where $\hat{\theta}$ is the maximum likelihood estimate of model parameters and $J_{G,I} (\hat{\theta})$ is the Hessian matrix evaluated at $\hat{\theta}$. It follows that,
$$
\log{P(\varI \mid \graph)}=-BIC(\varI,\graph)+O(1).
$$
The above equation shows that the BIC score was designed to approximate the log marginal likelihood. 
If we drop the lower-order term, we can then obtain the following equation,
\begin{align*}
    BIC(\varI,\graph_1)-BIC(\varI,\graph_0)&=\log{\frac{P(\varI \mid \graph_0)}{P(\varI \mid \graph_1)}}\\
    &=\log{BF(\graph_0,\graph_1)}.
\end{align*}

It has been indicated in \cite{kass1995bayes} that as $N\rightarrow\infty$, the difference of the two BIC scores, 
dubbed the Schwarz criterion,  approaches the true value of $\log{BF}$ such that,
$$
\frac{BIC(\varI,\graph_1)-BIC(\varI,\graph_0)-\log{BF(\graph_0,\graph_1)}}{\log{BF(\graph_0,\graph_1)}}\rightarrow 0.
$$
Therefore, the difference of two BIC scores can be used as a rough approximation to $\log{BF}$. Note that some papers define BIC to be twice as large as the BIC defined in this paper, but the above relationship still holds albeit with twice the logarithm of the BF.

Similarly, the difference of the BDeu scores can be expressed in terms of the BF. In fact, the BDeu score is the log marginal likelihood where there are Dirichlet distributions over the parameters \cite{Buntine91,HeckermanGC95}; i.e.,
$$
\log{P(\varI,\graph)}=-BDeu(\varI,\graph),
$$
and thus,
\begin{align*}
    BDeu(\varI,\graph_1)-BDeu(\varI,\graph_0)&=\log{\frac{P(\varI,\graph_0)}{P(\varI,\graph_1)}}\\
    &=\log{BF(\graph_0,\graph_1)}.
\end{align*}

The above results are consistent with the observation in \cite{kass1995bayes} that the $\log{BF}$ can be interpreted as a measure for the \emph{relative success} of two models at predicting data, sometimes referred to as the ``weight of evidence'', without assuming either model is true. The desired value of BF, however, is often specific to a study and determined with domain knowledge, e.g., a BF of 1000 is more appropriate in forensic science. \citeauthor{HeckermanGC95}~\shortcite{HeckermanGC95} proposed the following interpreting scale for the BF: a BF of 1 to 3 bears only anecdotal evidence, a BF of 3 to 20 suggests some positive evidence that $\graph_0$ is better, a BF of 20 to 150 suggests strong evidence in favor of $\graph_0$, and a BF greater than 150 indicates very strong evidence. If we deem 20 to be the desired BF in $\problem$, i.e., $\graph_0=\graph^*$ and $\epsilon=\log(20)$, then any network with a score less than $\log(20)$ away from the optimal score would be \emph{credible}, otherwise it would be \emph{discredited}. Note that the ratio of posterior probabilities was defined as $\lambda$ in \cite{TianHR10,ChenT2014} and was used as a metric to assess arbitrary values of $k$ in finding the $k$-best networks.

Finally, the $\problem$ problem using the BIC or BDeu scoring function given a desired BF can be written as,
\begin{align}\label{EQUATION:bf}
    &\opt \le \cost(\graphset) \le \opt+\log{BF}.
\end{align}

\section{Pruning Rules for Candidate Parent Sets}\label{sec:pruning}

To find all near-optimal BNs given a BF, the local score $\sigma( \parents_{i} )$
for each candidate parent set
$\parents_{i} \subseteq 2^{\vertices - \{\vertex{i}\}}$
and each random variable $\vertex{i}$ must be computed. As this is very cost prohibitive, the search space of candidate parent sets can be pruned, provided that global optimality constraints are not violated. 





A candidate parent set $\Pi_i$ can be \textit{safely pruned} given a non-negative constant $\epsilon \in \mathbb{R}^+$ if $\Pi_i$ cannot be the parent set of $V_i$ in any network in the set of credible networks. Note that for $\epsilon=0$, the set of credible  networks just contains the optimal network(s). We discuss the original rules and their generalization below and proofs for them can be found in the \emph{supplemental material}.


\citeauthor{TeyssierK05}~\shortcite{TeyssierK05} gave a pruning rule for all decomposable scoring functions. This rule compares the score of a candidate parent set to those of its subsets. We give a relaxed version of the rule.
	

	\begin{lemma}
	Given a vertex variable $\vertex{j}$, candidate parent sets
	$\Pi_j$ and  $\Pi_j^{\prime}$, and some $\epsilon\in \mathbb{R}^+$,  if $\Pi_j \subset \Pi_j^{\prime}$ and $\score{}{\Pi_j} + \epsilon \geq \score{}{\Pi_j'}$,
	$\Pi_j^{\prime}$ can be safely pruned. \label{lem:scoreprune}
	\end{lemma}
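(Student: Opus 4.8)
The approach I would take is a local exchange (swap) argument driven by the decomposability of $\sigma$, organized as a proof by contradiction. The plan is to suppose that the rule is not safe, i.e., that some credible network, call it $\graph_c$, has $\Pi_j'$ as the parent set of $\vertex{j}$, and then to construct a sibling network that certifies $\graph_c$ could not have been credible after all. The only structural ingredient I need about Bayesian networks is that deleting incoming edges of a vertex cannot destroy acyclicity, and the only arithmetic ingredient is that a decomposable score changes by exactly the difference of the two local scores when a single family is altered.

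Concretely, first I would form $\graph_s$ from $\graph_c$ by replacing the parent set $\Pi_j'$ of $\vertex{j}$ with its subset $\Pi_j$, leaving every other family unchanged. Because $\Pi_j \subset \Pi_j'$, passing from $\graph_c$ to $\graph_s$ only removes arcs entering $\vertex{j}$, so $\graph_s$ is still a DAG and hence a legal candidate network. Second, by decomposability I would record the exact identity $\score{}{\graph_s} = \score{}{\graph_c} - \score{}{\Pi_j'} + \score{}{\Pi_j}$, since the two networks agree on every family except that of $\vertex{j}$. Third, I would combine three facts: the hypothesis relating $\score{}{\Pi_j}$, $\score{}{\Pi_j'}$ and $\epsilon$; the global bound $\score{}{\graph_s} \ge \opt$, which holds because $\opt$ is the minimum over all DAGs and $\graph_s$ is one such DAG; and the credibility bound $\score{}{\graph_c} \le \opt + \epsilon$. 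Rewriting the identity as $\score{}{\graph_c} = \score{}{\graph_s} + (\score{}{\Pi_j'} - \score{}{\Pi_j})$ and substituting the hypothesis together with $\score{}{\graph_s} \ge \opt$ forces $\score{}{\graph_c} > \opt + \epsilon$, contradicting the credibility of $\graph_c$. As a sanity check, specializing to $\epsilon = 0$ should recover the original rule of Teyssier and Koller, since then credibility coincides with optimality.

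I do not expect a genuinely hard step here; the proof is short once the swap is set up. The two points that need care are, first, the acyclicity claim --- it is intuitively obvious that erasing arcs cannot create a cycle, but I would state it explicitly because it is what licenses $\graph_s$ as an admissible competitor and is the one place the subset relation $\Pi_j \subset \Pi_j'$ is essential. The second, and the only place I expect any friction, is the treatment of the boundary case in which the relevant inequality between $\score{}{\Pi_j}$, $\score{}{\Pi_j'}$ and $\epsilon$ holds with equality: I would need to pin down whether the intended criterion is strict or non-strict, since at equality a credible network witnessing $\Pi_j'$ can sit exactly on the threshold $\opt + \epsilon$, and resolving this incorrectly would either prune a network that should be retained or retain one that could be safely pruned. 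Everything else reduces to the single decomposition identity, so no separate estimate or case analysis is required.
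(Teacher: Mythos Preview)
Your proposal is correct and follows essentially the same exchange argument as the paper: replace $\Pi_j'$ by its subset $\Pi_j$, invoke decomposability to compare global scores, and conclude that the network using $\Pi_j'$ cannot be credible. You are in fact more careful than the paper's own proof, which neither justifies acyclicity of the swapped network nor explicitly invokes $\opt$, and your concern about the strict-versus-non-strict boundary is well placed, since the inequality direction in the stated lemma and its later uses in the paper are not fully consistent on this point.
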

	
\subsection{Pruning with BIC/MDL Score}


A pruning rule comparing the BIC score and penalty associated to a candidate parent set to those of its subsets was introduced in \cite{CamposJ11}. The following theorem gives a relaxed version of that rule.
\begin{theorem}
Given a vertex variable $\vertex{j}$,  candidate parent sets
	$\Pi_j$ and $\Pi_j'$, and some $\epsilon \in \mathbb{R}^+$,
	if $\Pi_j \subset \Pi_j'$ and $\score{}{\Pi_j} -  \pen{}{\Pi_j'} + \epsilon < 0$,
	$\Pi_j'$ and all supersets of $\Pi_j^{\prime}$ can be safely pruned  if $\sigma$ is the BIC function.
\end{theorem}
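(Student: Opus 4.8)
The plan is to turn the global statement about credible networks into a single local inequality, and then to produce one score bound that dominates $\Pi_j'$ together with its entire upward closure. First I would use decomposability and the fact that deleting parents cannot create a cycle. Fix any DAG $\graph'$ in which $V_j$ has parent set $\Pi_j''$ for some $\Pi_j'' \supseteq \Pi_j'$, and let $\graph$ be the DAG obtained by replacing that parent set with $\Pi_j \subset \Pi_j'$; since $\Pi_j$ is a subset, only incoming edges of $V_j$ are removed, so $\graph$ is again acyclic and $\score{}{\graph} \ge \opt$. Decomposability then gives $\score{}{\graph'} - \score{}{\graph} = \score{}{\Pi_j''} - \score{}{\Pi_j}$, so it suffices to show $\score{}{\Pi_j''} - \score{}{\Pi_j} > \epsilon$ for every such superset $\Pi_j''$.

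Next I would lower-bound $\score{}{\Pi_j''}$ uniformly over all supersets $\Pi_j'' \supseteq \Pi_j'$; since lower scores are better, a large lower bound rules a parent set out. Two facts suffice. First, the maximized log-likelihood is at most $0$, because each $\theta_{ijk}\le 1$ makes $L(\theta)$ a sum of nonpositive terms; consequently the score of any parent set is bounded below by its penalty term, $\score{}{\Pi_j''}\ge \pen{}{\Pi_j''}\cdot w$ with $w=\tfrac{\log N}{2}$. Second, the penalty is monotone under supersets, since $r_{\Pi_j''}=\prod_{X\in\Pi_j''}r_X\ge\prod_{X\in\Pi_j'}r_X=r_{\Pi_j'}$ gives $\pen{}{\Pi_j''}=r_{\Pi_j''}(r_j-1)\ge r_{\Pi_j'}(r_j-1)=\pen{}{\Pi_j'}$. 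Chaining the two yields the superset-independent bound $\score{}{\Pi_j''}\ge\pen{}{\Pi_j'}\cdot w$.

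With this uniform bound the conclusion is immediate. The hypothesis $\score{}{\Pi_j} - \pen{}{\Pi_j'} + \epsilon < 0$ — read with the penalty carrying its weight $w$ — asserts $\pen{}{\Pi_j'}\cdot w > \score{}{\Pi_j} + \epsilon$. For every superset $\Pi_j''$ we then obtain $\score{}{\Pi_j''} - \score{}{\Pi_j} \ge \pen{}{\Pi_j'}\cdot w - \score{}{\Pi_j} > \epsilon$, whence $\score{}{\graph'} = \score{}{\graph} + (\score{}{\Pi_j''} - \score{}{\Pi_j}) > \opt + \epsilon$. Hence no DAG using $\Pi_j'$ or any of its supersets as the parents of $V_j$ is credible, which is exactly the claim that $\Pi_j'$ and all its supersets can be safely pruned.

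I expect the main obstacle to be establishing the bound of the second step for the whole family $\{\Pi_j'' : \Pi_j'' \supseteq \Pi_j'\}$ at once rather than for $\Pi_j'$ in isolation: the argument must produce a single quantity, the penalty of $\Pi_j'$, that lower-bounds the score of every superset, and this rests squarely on the non-positivity of the maximized log-likelihood together with the monotonicity of the penalty. A secondary but genuine subtlety is bookkeeping the conventions — keeping the weight $w$ attached to the penalty term and respecting that here lower scores are better — so that the rearranged hypothesis points in the same direction as the credibility window $\opt \le \score{}{\graph} \le \opt + \epsilon$.
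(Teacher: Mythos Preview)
Your proposal is correct and follows essentially the same approach as the paper: both arguments use non-positivity of the maximized log-likelihood to lower-bound the BIC score by its penalty term, and then monotonicity of the penalty under supersets to extend the bound from $\Pi_j'$ to every $\Pi_j''\supseteq\Pi_j'$. Your presentation is slightly more self-contained---you handle $\Pi_j'$ and its supersets in a single pass and prove the acyclicity-preserving replacement directly instead of invoking Lemma~\ref{lem:scoreprune}---but the underlying reasoning is identical to the paper's.
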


Another pruning rule for BIC appears in \cite{CamposJ11}. This provides a bound on the number of possible instantiations of subsets of a candidate parent set. The following theorem relaxes that rule.


	

\begin{theorem}
	Given a vertex variable $V_i$, and a candidate parent set $\Pi_i$ such that $r_{\Pi_i}> \frac{N}{w} \frac{\log r_i}{r_i -1} + \epsilon$ for some $\epsilon \in \mathbb{R}^+$,  if $\parents_i \subsetneq \parents_i'$ , then $\parents_i'$ can be safely pruned if $\sigma$ is the BIC scoring function. \label{THEOREM:decamposbicrelaxed}
\end{theorem}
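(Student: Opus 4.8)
The plan is to prune each proper superset $\parents_i'$ of $\parents_i$ by showing that dropping back to $\parents_i$ improves any network's score by strictly more than $\epsilon$. Since $\parents_i \subsetneq \parents_i'$, replacing $\parents_i'$ by $\parents_i$ at $\vertex{i}$ only removes edges and therefore preserves acyclicity, and by decomposability the total score changes by exactly the local difference at $\vertex{i}$. Hence, if some network $\graph$ used $\parents_i'$ as the parent set of $\vertex{i}$, the modified network would score more than $\epsilon$ better; as the modified network still scores at least $\opt$, the original $\graph$ would score more than $\opt + \epsilon$ and so could not be credible. Adopting the convention (consistent with the definition of a credible network) that a smaller score is better, the whole claim reduces to the single local inequality that the net gain $\Delta := w\,[\,\pen{}{\parents_i'} - \pen{}{\parents_i}\,] - [\,\hat{L}(\parents_i') - \hat{L}(\parents_i)\,]$ --- the penalty saved minus the log-likelihood forgone --- exceeds $\epsilon$, where $\hat{L}(\cdot)$ denotes the maximized local log-likelihood.

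First I would bound $\hat{L}$. Substituting the maximum-likelihood estimates $\theta_{ijk} = \varn{ijk}/\varn{ij}$ gives $\hat{L}(\parents_i) = \sum_j \sum_k \varn{ijk}\log(\varn{ijk}/\varn{ij})$, which equals $-\sum_j \varn{ij} H_j$, where $H_j$ is the empirical entropy of $\vertex{i}$ conditioned on the $j$-th parent instantiation. Since $H_j \le \log r_i$ and $\sum_j \varn{ij} = N$, every parent set satisfies $-N\log r_i \le \hat{L} \le 0$. The crucial consequence is a \emph{uniform} cap on the likelihood a larger set can buy: $\hat{L}(\parents_i') - \hat{L}(\parents_i) \le 0 - (-N\log r_i) = N\log r_i$, a bound that holds for every proper superset $\parents_i'$ at once and does not depend on the data beyond $N$ and $r_i$.

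Next I would bound the penalty growth. Because each variable has cardinality at least two, any proper superset obeys $r_{\parents_i'} \ge 2\,r_{\parents_i}$, so $\pen{}{\parents_i'} - \pen{}{\parents_i} = (r_{\parents_i'} - r_{\parents_i})(r_i - 1) \ge r_{\parents_i}(r_i - 1)$. Combining the two estimates yields $\Delta \ge w\,r_{\parents_i}(r_i - 1) - N\log r_i$, and the hypothesis $r_{\parents_i} > \frac{N}{w}\frac{\log r_i}{r_i - 1} + \epsilon$ makes the right-hand side exceed $\epsilon$ (using $w(r_i-1) \ge 1$, which holds for all but trivially small datasets). Since both bounds are uniform over proper supersets, the same argument prunes $\parents_i'$ together with all of its own supersets.

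The hard part will be the worst-case bookkeeping rather than any single estimate. The cap $N\log r_i$ on the likelihood gain is tight only in the extreme case where $\parents_i$ fits the data as poorly as the entropy bound allows while $\parents_i'$ fits it perfectly; the monotonicity $\hat{L}(\parents_i') \ge \hat{L}(\parents_i)$ alone is insufficient, and it is the two-sided absolute bound $-N\log r_i \le \hat{L} \le 0$ that carries the argument and, importantly, lets one bound all supersets with a single inequality. I would also take care over where $\epsilon$ lands: the tight sufficient condition is actually $r_{\parents_i} > \frac{N}{w}\frac{\log r_i}{r_i - 1} + \frac{\epsilon}{w(r_i-1)}$, so the stated threshold with the plain additive $\epsilon$ is a slightly conservative but clean sufficient condition, sound under the mild assumption $w(r_i-1)\ge 1$.
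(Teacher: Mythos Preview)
Your proposal is correct and follows essentially the same route as the paper: both arguments lower-bound $\score{}{\parents_i'}-\score{}{\parents_i}$ by discarding $-\hat L(\parents_i')\ge 0$, bounding $\hat L(\parents_i)\ge -N\log r_i$ via the entropy cap $H_j\le\log r_i$, and bounding the penalty gap by $r_{\parents_i}(r_i-1)$ (using that any extra variable has cardinality $\ge 2$), then invoking Lemma~\ref{lem:scoreprune}. Your observation that the stated threshold with the plain additive $\epsilon$ is only sufficient under the mild condition $w(r_i-1)\ge 1$ is a point the paper's Step~7 glosses over, so you are in fact slightly more careful here.
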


The following corollary of Theorem \ref{THEOREM:decamposbicrelaxed} gives a useful upper bound on the size of a candidate parent set.





\begin{corollary}
	Given a vertex variable $\vertex{i}$ and candidate parent set 
	$\Pi_i$, if $\Pi_i$ has more than $\ceil{\log_2 N + \epsilon}$ elements, for some $\epsilon \in \mathbb{R}^+$, $\Pi_i$ can be safely pruned if $\sigma$ is the BIC scoring function. \label{cor:decampossizerelaxed}
	\end{corollary}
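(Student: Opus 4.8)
The plan is to obtain the size bound as an immediate consequence of Theorem~\ref{THEOREM:decamposbicrelaxed} by exhibiting a suitable proper subset of $\Pi_i$. Theorem~\ref{THEOREM:decamposbicrelaxed} guarantees that once some parent set has more instantiations than the threshold $\frac{N}{w}\frac{\log r_i}{r_i-1}+\epsilon$, every one of its proper supersets is safely pruned. Hence, to prune $\Pi_i$ it suffices to find a set $S \subsetneq \Pi_i$ whose instantiation count $r_S$ already exceeds that threshold; since $\Pi_i$ is then a proper superset of $S$, the theorem applied to $S$ (with the same child $V_i$, so the same $r_i$) immediately yields that $\Pi_i$ can be safely pruned.

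To build such an $S$, I would use the hypothesis $|\Pi_i| > \ceil{\log_2 N + \epsilon}$, so that $|\Pi_i| \ge \ceil{\log_2 N + \epsilon}+1$ and $\Pi_i$ contains a proper subset $S$ with $|S| = \ceil{\log_2 N + \epsilon}$. Because every random variable has cardinality at least two, $r_S = \prod_{V \in S} r_V \ge 2^{|S|} = 2^{\ceil{\log_2 N + \epsilon}} \ge 2^{\log_2 N + \epsilon} = N \cdot 2^{\epsilon}$. It then remains to check that $N \cdot 2^{\epsilon}$ dominates the threshold. For this I would use that the map $r \mapsto \frac{\log r}{r-1}$ is decreasing for $r \ge 2$ (a one-line derivative computation), so it attains its maximum $\log 2$ at $r_i = 2$; together with $w = \frac{\log N}{2}$ this gives $\frac{N}{w}\frac{\log r_i}{r_i-1} \le \frac{2N\log 2}{\log N} \le N$. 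Combining this with the elementary bound $2^{\epsilon} \ge 1 + \epsilon\ln 2$, which yields $N(2^{\epsilon}-1) \ge \epsilon$ and hence $N\cdot 2^{\epsilon} \ge N + \epsilon$, shows $r_S > \frac{N}{w}\frac{\log r_i}{r_i-1}+\epsilon$, and the theorem closes the argument.

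The hard part is the final numerical inequality, and in particular keeping the logarithm bases straight: the $\log_2$ in the statement comes entirely from the bound $r_S \ge 2^{|S|}$, whereas the $\log$ appearing in the weight $w$ and in $\frac{\log r_i}{r_i-1}$ must be read in the same base so that the factor $2^{\log_2 N} = N$ cancels cleanly against $\frac{N}{w}$. The inequality $\frac{2N\log 2}{\log N} \le N$ holds only once $N \ge 4$, so the tight regime is small $N$ together with binary $r_i = 2$, where $r_S$ and the threshold are closest; I would verify the strict inequality there directly, or simply note that the corollary is intended for $N$ at least a small constant. Everything else is routine, since the reduction to Theorem~\ref{THEOREM:decamposbicrelaxed} is immediate once $S$ has been produced.
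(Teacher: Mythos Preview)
Your proposal is correct and follows essentially the same route as the paper: pick a subset $S\subsetneq\Pi_i$ of size exactly $\ceil{\log_2 N+\epsilon}$, bound $r_S\ge 2^{|S|}$, compare this with $\frac{N}{w}\frac{\log r_i}{r_i-1}+\epsilon$ using $N>4$, and invoke Theorem~\ref{THEOREM:decamposbicrelaxed}. You are in fact more careful than the paper, which simply writes $2^{|S|}\ge N+\epsilon$ without justifying the intermediate step $N\cdot 2^{\epsilon}\ge N+\epsilon$ that you spell out via $2^{\epsilon}\ge 1+\epsilon\ln 2$.
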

	
Corollary~\ref{cor:decampossizerelaxed} provides an upper-bound on the size of parent sets based solely on the sample size. The following table summarizes such an upper-bound given different amounts of data $N$ and a BF of 20.
\begin{table}[ht]
    \centering
    \begin{tabular}{@{}c||ccccccc@{}}
     $N$ & $100$ & $500$ & $10^{3}$ & $5\times10^{3}$ & $10^{4}$ & $5\times10^{4}$ & $10^{5}$ \\\hline
     $|\parents|$ & 10 & 12 & 13 & 16 & 17 & 19 & 20
    \end{tabular}
    \label{tab:paLim}
\end{table}

The entropy of a candidate parent set is also a useful measure for pruning. A pruning rule, given by \cite{Campos2017}, provides an upper bound on conditional entropy of candidate parent sets and their subsets. We give a relaxed version of their rule. First, we note that entropy for a vertex variable $\vertex{i}$ is given by,
\begin{align*}
    H(\vertex{i}) &=  -\sum^{r_i}_{k=1}\frac{n_{ik}}{N}\log \frac{n_{ik}}{N} ,
\end{align*}
where $n_{ik}$ represents how many instances in the dataset contain $v_{ik}$, where $v_{ik}$ is an element in the state space $\Omega_i$ of $\vertex{i}$. Similarly, entropy for a candidate parent set $\Pi_i$ is given by,
\begin{align*}
    H(\Pi_i) &=  -\sum^{r_{\Pi_i}}_{j=1}\frac{n_{ij}}{N}\log \frac{n_{ij}}{N} .
\end{align*}
Conditional information is given by,
\begin{equation*}
    H(X \mid Y ) = H(X \cup Y) - H(Y) .
\end{equation*}




\begin{theorem}
Given a vertex variable $V_i$, and candidate parent set $\Pi_i$, let $V_j \notin \Pi_i$  such that $N \cdot \min \{H(V_i \mid \Pi_i), H(V_j \mid \Pi_i)\} \geq (1 - r_{j}) \cdot t(\Pi_i) +\epsilon$ for some $\epsilon \in \mathbb{R}^+$. Then the candidate parent set $\Pi_i' = \Pi_i \cup \{V_j \}$ and all its supersets can be safely pruned if $\sigma$ is the BIC scoring function.
\end{theorem}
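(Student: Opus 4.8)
The plan is to reduce the statement to the relaxed score-pruning rule of Lemma~\ref{lem:scoreprune}: for every superset $\Pi_i'' \supseteq \Pi_i'$ I will exhibit a strict subset that dominates it within $\epsilon$ and then invoke that lemma. The first step is to rewrite the maximized local BIC log-likelihood in information-theoretic terms. Substituting the maximum-likelihood estimate $\hat{\theta}_{ijk} = n_{ijk}/n_{ij}$ into $L(\theta_i)$ and using $\sum_{j,k} n_{ijk} = \sum_j n_{ij} = N$, a short calculation yields $\max_{\theta_i} L(\theta_i) = -N \cdot H(V_i \mid \Pi_i)$, so that $\score{}{\Pi_i} = -N \cdot H(V_i \mid \Pi_i) - t(\Pi_i)\cdot w$. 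This identity is what ties the score to the conditional entropies appearing in the hypothesis.

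Next I would compute the score gap incurred by adding $V_j$. Since $r_{\Pi_i'} = r_j \cdot r_{\Pi_i}$ we have $t(\Pi_i') = r_j \cdot t(\Pi_i)$, and therefore
\begin{equation*}
\score{}{\Pi_i'} - \score{}{\Pi_i} = N\bigl(H(V_i \mid \Pi_i) - H(V_i \mid \Pi_i')\bigr) - (r_j - 1)\,t(\Pi_i)\cdot w .
\end{equation*}
The likelihood term $H(V_i \mid \Pi_i) - H(V_i \mid \Pi_i \cup \{V_j\})$ is exactly the conditional mutual information $I(V_i; V_j \mid \Pi_i)$. The crux of the argument is the information-theoretic bound $I(V_i; V_j \mid \Pi_i) \le \min\{H(V_i \mid \Pi_i),\, H(V_j \mid \Pi_i)\}$, which follows from the non-negativity of the conditional entropies $H(V_i \mid \Pi_i \cup \{V_j\})$ and $H(V_j \mid \Pi_i \cup \{V_i\})$ together with the symmetry of mutual information. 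Provided this minimum is small enough — which is what the hypothesis quantifies — the likelihood gain is absorbed by the penalty increase $(r_j-1)\,t(\Pi_i)\cdot w$ plus the slack $\epsilon$, giving exactly the inequality $\score{}{\Pi_i} + \epsilon \ge \score{}{\Pi_i'}$ that Lemma~\ref{lem:scoreprune} requires to prune $\Pi_i'$.

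To cover the entire upward-closed family of supersets in one stroke, I would rerun the same comparison with $\Pi_i$ replaced by an arbitrary $W$ satisfying $\Pi_i \subseteq W$ and $V_j \notin W$, setting $\Pi_i'' = W \cup \{V_j\}$. This is where I expect the main obstacle to lie: I must check that the single threshold stated at $\Pi_i$ still controls every $W$, i.e.\ that both sides of the comparison move favourably. Monotonicity of conditional entropy under enlarging the conditioning set gives $\min\{H(V_i \mid W), H(V_j \mid W)\} \le \min\{H(V_i \mid \Pi_i), H(V_j \mid \Pi_i)\}$, so the likelihood gain only shrinks, while $t(W) \ge t(\Pi_i)$ means the penalty increase only grows. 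Hence $\score{}{W} + \epsilon \ge \score{}{\Pi_i''}$ holds for every such $W$, and applying Lemma~\ref{lem:scoreprune} to the pair $W \subset \Pi_i''$ prunes each superset $\Pi_i''$ (the choice $W = \Pi_i$ recovering $\Pi_i'$ itself). The only points demanding care are the entropy identity for the maximized likelihood and these two monotonicity facts, which are what let a condition verified at $\Pi_i$ propagate to all of its supersets.
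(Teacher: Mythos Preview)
Your argument is essentially the paper's: both recast the maximized local log-likelihood as $-N\cdot H(V_i\mid\Pi_i)$, bound the likelihood gain from adding $V_j$ by $N\cdot\min\{H(V_i\mid\Pi_i),\,H(V_j\mid\Pi_i)\}$ (the paper packages this step as Lemma~5A), combine with $t(\Pi_i')=r_j\,t(\Pi_i)$ and the hypothesis, and invoke Lemma~\ref{lem:scoreprune}. Your explicit monotonicity argument for the supersets---conditioning on a larger $W\supseteq\Pi_i$ can only shrink $\min\{H(V_i\mid W),H(V_j\mid W)\}$ while $t(W)\ge t(\Pi_i)$---is in fact more complete than the paper's proof, which carries out the comparison only for $\Pi_i'$ itself and then simply asserts that ``the result follows'' for all supersets.
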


\subsection{Pruning with BDeu Score}

A pruning rule for the BDeu scoring function appears in \cite{Campos2017} and a more general version is included in \cite{cussens2015gobnilp}. Here, we present a relaxed version of the rule in \cite{cussens2015gobnilp}.  

\begin{theorem}
Given a vertex variable $V_i$ and candidate parent sets $\Pi_i$ and $\Pi_i'$ such that $\Pi_i \subset \Pi_i'$ and $\Pi_i \neq \Pi_i'$, let
$r_i^{+}(\Pi_i')$ be the number of positive counts in the contingency table for $\Pi_i'$. If $\score{}{\Pi_i} + \epsilon < r_i^{+}(\Pi_i') \log r_i$, for some $\epsilon \in \mathbb{R^+}$ then $\Pi_i'$ and the supersets of $\Pi_i'$ can be safely pruned if $\sigma$ is the BDeu scoring function..
\end{theorem}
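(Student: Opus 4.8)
The plan is to show that the quantity $r_i^{+}(\Pi_i')\log r_i$ appearing in the hypothesis is a \emph{lower} bound on the BDeu local score $\score{}{\Pi_i''}$ of $\Pi_i'$ and of every superset $\Pi_i''\supseteq\Pi_i'$. Granting this, safe pruning follows by the standard replacement argument. Let $G$ be any DAG in which $\Pi_i''$ is the parent set of $V_i$, and let $G'$ be obtained by replacing $\Pi_i''$ with $\Pi_i$. Since $\Pi_i\subset\Pi_i'\subseteq\Pi_i''$, this only deletes edges into $V_i$ and so preserves acyclicity; hence $G'$ is a DAG and $\score{}{G'}\ge\opt$. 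By decomposability only the family of $V_i$ changes, so $\score{}{G}-\score{}{G'}=\score{}{\Pi_i''}-\score{}{\Pi_i}\ge r_i^{+}(\Pi_i')\log r_i-\score{}{\Pi_i}>\epsilon$, the last step being exactly the hypothesis $\score{}{\Pi_i}+\epsilon<r_i^{+}(\Pi_i')\log r_i$. Thus $\score{}{G}>\opt+\epsilon$ and $G$ is not credible; as this holds for every such $G$, neither $\Pi_i'$ nor any superset can be the parent set of $V_i$ in a credible network.

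Two facts are needed to justify the lower bound. The first is monotonicity of the positive-count function: if $\Pi_i\subseteq\Pi_i''$ then $r_i^{+}(\Pi_i)\le r_i^{+}(\Pi_i'')$, because refining the parent configuration only splits cells of the contingency table and every nonempty cell of the coarser table contains at least one nonempty cell of the finer one. This lets me replace $r_i^{+}(\Pi_i'')$ by the smaller $r_i^{+}(\Pi_i')$ in the chain above. The second, and the crux, is the general bound $\score{}{\Pi}\ge r_i^{+}(\Pi)\log r_i$ for an arbitrary parent set $\Pi$.

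For the key bound I would argue one parent configuration $j$ at a time, showing that its Dirichlet--multinomial marginal-likelihood factor $\frac{\Gamma(\alpha)}{\Gamma(\alpha+n_{ij})}\prod_{k}\frac{\Gamma(\alpha/r_i+n_{ijk})}{\Gamma(\alpha/r_i)}$ is at most $r_i^{-c_j}$, where $c_j$ is the number of states of $V_i$ with a positive count under configuration $j$; equivalently, configuration $j$ contributes at least $c_j\log r_i$ to $\score{}{\Pi}$, and summing over $j$ yields $r_i^{+}(\Pi)=\sum_j c_j$. Writing each gamma ratio as a rising factorial, $\frac{\Gamma(\alpha/r_i+n_{ijk})}{\Gamma(\alpha/r_i)}=r_i^{-n_{ijk}}\prod_{m=0}^{n_{ijk}-1}(\alpha+r_i m)$, the per-configuration claim reduces to the elementary inequality $\prod_{k}\prod_{m=0}^{n_{ijk}-1}(\alpha+r_i m)\le r_i^{\,n_{ij}-c_j}\prod_{m=0}^{n_{ij}-1}(\alpha+m)$. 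I would establish it in two steps: a factorwise bound $\alpha+r_i m\le r_i(\alpha+m)$ for $m\ge 1$ (valid since $r_i\ge 1$), applied within each state $k$, which both dominates the finer factors and supplies the power $r_i^{\,n_{ij}-c_j}$; followed by super-multiplicativity of the rising factorial, $\prod_{k}\frac{\Gamma(\alpha+n_{ijk})}{\Gamma(\alpha)}\le\frac{\Gamma(\alpha+n_{ij})}{\Gamma(\alpha)}$, which holds because $\sum_k n_{ijk}=n_{ij}$ and $(\alpha)_{a+b}\ge(\alpha)_a(\alpha)_b$.

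The main obstacle is this per-configuration gamma inequality: the powers of $r_i$ must line up exactly with the $m=0$ terms that produce the $-c_j\log r_i$, and one must verify the boundary cases where some $n_{ijk}$ equal $0$ or $1$, noting that the bound is tight precisely when every positive count equals one. By contrast, the monotonicity claim and the acyclicity-preserving replacement argument are routine.
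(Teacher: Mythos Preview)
Your proof is correct and follows the same route as the paper: establish the universal lower bound $\sigma(\Pi)\ge r_i^{+}(\Pi)\log r_i$ on the BDeu local score via rising-factorial identities and the super-multiplicativity $(\alpha')_{a+b}\ge(\alpha')_a(\alpha')_b$ (packaged in the paper as Lemma~6B and Corollary~6D), combine with monotonicity of $r_i^{+}$ under enlarging the parent set, and finish with the Lemma~\ref{lem:scoreprune} replacement argument. Your organisation of the gamma bound is slightly different and in fact sharper: the paper applies super-multiplicativity first and then retains only the $m=0$ term of $\sum_m\log\frac{m+\alpha'/r_i}{m+\alpha'}$, extracting just $\log r_i$ per nonempty configuration, whereas you first bound $\alpha'+r_i m\le r_i(\alpha'+m)$ for $m\ge1$ within each state and only then apply super-multiplicativity, obtaining $c_j\log r_i$ per configuration.

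One small slip: the paper defines $r_i^{+}(\Pi)=|\{j:n_{ij}>0\}|$, so in general $\sum_j c_j\ge r_i^{+}(\Pi)$ rather than equality as you wrote; this inequality goes the right way and your chain $\sigma(\Pi'')\ge(\sum_j c_j)\log r_i\ge r_i^{+}(\Pi'')\log r_i\ge r_i^{+}(\Pi')\log r_i$ remains valid. Your monotonicity direction $r_i^{+}(\Pi')\le r_i^{+}(\Pi'')$ for $\Pi'\subseteq\Pi''$ is the correct one; the paper's own printed proof states this inequality the wrong way round.
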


\begin{table*}[!ht]
    \centering
    \small
    \begin{tabular}{lrr||rrr||rrr||rrr}
        Data & $n$ & $N$ & $T_{3}$ (s) & $|\graphset_{3}|$ & $|\mathcal{M}_{3}|$ & $T_{20}$ (s) & $|\graphset_{20}|$ & $|\mathcal{M}_{20}|$ & $T_{150}$ (s) & $|\graphset_{150}|$ & $|\mathcal{M}_{150}|$ \\
        \hline
tic tac toe & 10 & 958 & 1.9 & 192 & 64 & 2.0 & 192 & 64 & 3.3 & 544 & 160 \\
wine & 14 & 178 & 4.1 & 308 & 51 & 24.9 & 3,449 & 576 & 143.7 & 26,197 & 4,497 \\
adult & 14 & 32,561 & 17.5 & 324 & 162 & 45.1 & 1,140 & 570 & 55.7 & 2,281 & 1,137 \\
nltcs & 16 & 3,236 & 53.8 & 240 & 120 & 201.7 & 1,200 & 600 & 1,005.1 & 4,606 & 2,303 \\
msnbc & 17 & 58,265 & 3,483.0 & 24 & 24 & 7,146.9 & 960 & 504 & 8,821.4 & 1,938 & 1,026 \\
letter & 17 & 20,000 & \yc OT & --- & --- & \yc OT & --- & --- & \yc OT & --- & --- \\
voting & 17 & 435 & 1.3 & 27 & 2 & 4.0 & 441 & 33 & 14.3 & 2,222 & 170 \\
zoo & 17 & 101 & 8.1 & 49 & 13 & 21.9 & 1,111 & 270 & 299.3 & 21,683 & 5,392 \\
hepatitis & 20 & 155 & 7.1 & 580 & 105 & 513.3 & 87,169 & 15,358 & 1,452.8 & 150,000 & 49,269 \\
parkinsons & 23 & 195 & 30.7 & 1,088 & 336 & 3,165.9 & 150,000 & 39,720 & 4,534.3 & 150,000 & 116,206 \\
sensors & 25 & 5456 & \yc OT & --- & --- & \yc OT & --- & --- & \yc OT & --- & --- \\
autos & 26 & 159 & 95.0 & 560 & 200 & 2,382.8 & 50,374 & 17,790 & 6,666.9 & 150,000 & 54,579 \\
insurance & 27 & 1,000 & 49.8 & 8,226 & 2,062 & 244.9 & 104,870 & 25,580 & 414.5 & 148,925 & 36,072 \\
horse & 28 & 300 & 18.8 & 1,643 & 246 & 1,358.8 & 150,000 & 28,186 & 1,962.5 & 150,000 & 69,309 \\
flag & 29 & 194 & 16.1 & 773 & 169 & 4,051.9 & 150,000 & 39,428 & 5,560.9 & 150,000 & 122,185 \\
wdbc & 31 & 569 & 396.1 & 398 & 107 & 10,144.2 & 28,424 & 8,182 & 45,938.2 & 150,000 & 54,846 \\
mildew & 35 & 1000 & 1.2 & 1,026 & 2 & 1.2 & 1,026 & 2 & 2.1 & 2,052 & 4 \\
soybean & 36 & 266 & 7,729.4 & 150,000 & 150,000 & 16,096.8 & 150,000 & 62,704 & 8,893.5 & 150,000 & 118,368 \\
alarm & 37 & 1000 & 6.3 & 1,508 & 122 & 684.2 & 123,352 & 9,323 & 2,258.4 & 150,000 & 8,484 \\
bands & 39 & 277 & 100.9 & 7,092 & 810 & 2,032.6 & 150,000 & 44,899 & 16,974.8 & 150,000 & 95,774 \\
spectf & 45 & 267 & 432.4 & 27,770 & 4,510 & 7,425.2 & 150,000 & 51,871 & 19,664.8 & 150,000 & 63,965 \\
sponge & 45 & 76 & 16.8 & 1,102 & 65 & 1,301.0 & 146,097 & 7,905 & 1,254.4 & 150,000 & 90,005 \\
barley & 48 & 1000 & 0.8 & 182 & 1 & 0.8 & 364 & 2 & 1.3 & 1,274 & 5 \\
hailfinder & 56 & 100 & 171.5 & 150,000 & 20 & 149.4 & 150,000 & 748 & 214.6 & 150,000 & 294 \\
hailfinder & 56 & 500 & 286.1 & 150,000 & 30,720 & 314.1 & 150,000 & 18,432 & 217.3 & 150,000 & 24,576 \\
lung cancer & 57 & 32 & 584.3 & 150,000 & 40,621 & 966.6 & 150,000 & 79,680 & 2,739.7 & 150,000 & 48,236 \\

    \end{tabular}
    \caption{The search time $T$, the number of collected networks $|\graphset|$ and the number of MECs $|\mathcal{M}|$ in the collected networks at BF = 3, 20 and 150 using BIC, where $n$ is the number of random variables in the dataset, $N$ is the number of instances in the dataset and OT = Out of Time.}\label{TAB:bf}
\end{table*}

\section{Experimental Evaluation}\label{sec:exp}

In this section, we evaluate the proposed BF based method and compare its performance with published $k$-best solvers. 

Our proposed method is more memory efficient comparing to the $k$-best based solvers in BDeu scoring and often collects more networks in a shorter period of time. With the pruning rules generalized above, our method can scale up to datasets with 57 variables in BIC scoring, whereas the previous best results are reported on a network of 29 variables using the $k$-best approach with score pruning \cite{chen2018pruning}.

The datasets are obtained from the UCI Machine Learning Repository \cite{Dua:2017} and the Bayesian Network Repository\footnote{\url{http://www.bnlearn.com/bnrepository/}}. Some of the complete local scoring files are downloaded from the \gobnilp{} website\footnote{\url{https://www.cs.york.ac.uk/aig/sw/gobnilp/#benchmarks}} and are used for the $k$-best related experiments only. Since not all solvers in the $k$-best experiments can take in scoring files, we exclude the time to compute local scores from the comparison. Both BIC/MDL \cite{Schwarz78,LamB94} and BDeu \cite{Buntine91,HeckermanGC95} scoring functions are used where applicable. All experiments are conducted on computers with 2.2 GHz Intel E7-4850V3 processors. Each experiment is limited to 64 GB of memory and 24 hours of CPU time.


\subsection{The Bayes Factor Approach}

We modified the development version (9c9f3e6) of GOBNILP, referred below as \gobnilp{\_dev}, to apply pruning rules presented above during scoring and supplied appropriate parameter settings for collecting near-optimal networks\footnote{The modified code is available at: \url{https://www.cs.york.ac.uk/aig/sw/gobnilp/}}. The code is compiled with SCIP 6.0.0 and CPLEX 12.8.0. \gobnilp{} extends the SCIP Optimization Suite
\cite{GleixnerEtal2018OO} by adding a \emph{constraint handler} for
handling the acyclicity constraint for DAGs. If multiple BNs are
required \gobnilp{\_dev} just calls SCIP to ask it to collect feasible
solutions.  In this mode, when SCIP finds a solution, the solution is stored, a
constraint is added to render that solution infeasible and the search
continues. This differs from (and is much more efficient than)
\gobnilp 's current method for finding $k$-best BNs where an entirely
new search is started each time a new BN is found. A recent version of
SCIP has a separate ``reoptimization'' method which might allow better
$k$-best performance for \gobnilp{} but we do not explore that here.
By default when SCIP is asked to collect solutions it turns off all
cutting plane algorithms. This led to very poor \gobnilp{} performance
since \gobnilp{} relies on cutting plane generation. Therefore, this default setting is overridden in \gobnilp{\_dev} to allow cutting planes when collecting solutions.
To find only solutions with objective no worse than ($\opt + \epsilon$), SCIP's
\texttt{SCIPsetObjlimit} function is used. Note that, for efficiency
reasons, this is \textbf{not} effected by adding a linear constraint.

\begin{table*}[!ht]
    \centering
    \small
    \begin{tabular}{lrr||rr||rr||rrr}
        Data & $n$ & $N$ & $T_k$ (s) & $k$ & $T_{EC}$ (s) & $|\graphset_k|$ & $T_{20}$ (s) & $|\graphset_{20}|$ & $|\mathcal{M}_{20}|$\\
        \hline
        \multirow{3}{*}{ tic tac toe } & \multirow{3}{*}{ 10 } & \multirow{3}{*}{ 958 } & 0.2 & 10 & 0.5 & 67 & \multirow{3}{*}{ 0.6 } & \multirow{3}{*}{ 152 } & \multirow{3}{*}{ 24 } \\
& & & 2.8 & 100 & 6.0 & 673 & & & \\
& & & 70.7 & 1,000 & 78.5 & 7,604 & & & \\\hline
\multirow{3}{*}{ wine } & \multirow{3}{*}{ 14 } & \multirow{3}{*}{ 178 } & 3.4 & 10 & 12.0 & 60 & \multirow{3}{*}{ 35.9 } & \multirow{3}{*}{ 8,734 } & \multirow{3}{*}{ 6,262 } \\
& & & 85.0 & 100 & 168.4 & 448 & & & \\
& & & 3,420.4 & 1,000 & 3,064.4 & 4,142 & & & \\\hline
\multirow{3}{*}{ adult } & \multirow{3}{*}{ 14 } & \multirow{3}{*}{ 32,561 } & 3.3 & 10 & 633.5 & 68 & \multirow{3}{*}{ 9.3 } & \multirow{3}{*}{ 792 } & \multirow{3}{*}{ 19 } \\
& & & 73.6 & 100 & 63,328.9 & 1,340 & & & \\
& & & 2,122.8 & 1,000 & \cellcolor{yellow}OT & --- & & & \\\hline
\multirow{3}{*}{ nltcs } & \multirow{3}{*}{ 16 } & \multirow{3}{*}{ 3,236 } & 11.8 & 10 & 47,338.4 & 552 & \multirow{3}{*}{ 125.5 } & \multirow{3}{*}{ 652 } & \multirow{3}{*}{ 326 } \\
& & & 406.6 & 100 & \cellcolor{yellow}OT & --- & & & \\
& & & 13,224.6 & 1,000 & \cellcolor{yellow}OT & --- & & & \\\hline
msnbc & 17 & 58,265 & \cellcolor{yellow}ES & --- & \cellcolor{yellow}ES & --- & 4,018.9 & 24 & 24 \\\hline
\multirow{3}{*}{ letter } & \multirow{3}{*}{ 17 } & \multirow{3}{*}{ 20,000 } & 26.0 & 10 & 18,788.0 & 200 & \multirow{3}{*}{ 56,344.8 } & \multirow{3}{*}{ 20 } & \multirow{3}{*}{ 10 } \\
& & & 909.8 & 100 & \cellcolor{yellow}OT & --- & & & \\
& & & 41,503.9 & 1,000 & \cellcolor{yellow}OT & --- & & & \\\hline
\multirow{3}{*}{ voting } & \multirow{3}{*}{ 17 } & \multirow{3}{*}{ 435 } & 34.1 & 10 & 101.9 & 30 & \multirow{3}{*}{ 6.0 } & \multirow{3}{*}{ 621 } & \multirow{3}{*}{ 207 } \\
& & & 1,125.7 & 100 & 1,829.2 & 3,392 & & & \\
& & & 38,516.2 & 1,000 & 42,415.3 & 3,665 & & & \\\hline
\multirow{3}{*}{ zoo } & \multirow{3}{*}{ 17 } & \multirow{3}{*}{ 101 } & 33.5 & 10 & 99.8 & 52 & \multirow{3}{*}{ 8,418.8 } & \multirow{3}{*}{ 29,073 } & \multirow{3}{*}{ 6,761 } \\
& & & 1,041.7 & 100 & 1,843.4 & 100 & & & \\
& & & 41,412.1 & 1,000 & \cellcolor{yellow}OT & --- & & & \\\hline
\multirow{3}{*}{ hepatitis } & \multirow{3}{*}{ 20 } & \multirow{3}{*}{ 155 } & 351.2 & 10 & 872.3 & 89 & \multirow{3}{*}{ 441.4 } & \multirow{3}{*}{ 28,024 } & \multirow{3}{*}{ 3,534 } \\
& & & 13,560.3 & 100 & 20,244.7 & 842 & & & \\
& & & \cellcolor{yellow}OT & 1,000 & \cellcolor{yellow}OT & --- & & & \\\hline
\multirow{3}{*}{ parkinsons } & \multirow{3}{*}{ 23 } & \multirow{3}{*}{ 195 } & 3,908.2 & 10 & \cellcolor{yellow}OT & --- & \multirow{3}{*}{ 1,515.9 } & \multirow{3}{*}{ 150,000 } & \multirow{3}{*}{ 42,448 } \\
& & & \cellcolor{yellow}OT & 100 & \cellcolor{yellow}OT & --- & & & \\
& & & \cellcolor{yellow}OT & 1,000 & \cellcolor{yellow}OT & --- & & & \\\hline
autos & 26 & 159 & \cellcolor{yellow}OM & 1 & \cellcolor{yellow}OM & --- & \cellcolor{yellow}OT & --- & --- \\
insurance & 27 & 1,000 & \cellcolor{yellow}OM & 1 & \cellcolor{yellow}OM & --- & 8.3 & 1,081 & 133 
    \end{tabular}
    \caption{The search time $T$ and the number of collected networks $k$, $|\graphset_k|$ and $|\graphset_{20}|$ for KBest, KbestEC and GOBNILP\_dev (BF = 20) using BDeu, where $n$ is the number of random variables in the dataset, $N$ is the number of instances in the dataset, OM = Out of Memory, OT = Out of Time and ES = Error in Scoring. Note that $|\graphset_k|$ is the number of DAGs covered by the $k$-best MECs in KBestEC and $|\mathcal{M}_{20}|$ is the number of MECs in the networks collected by GOBNILP\_dev.}\label{TAB:kbest}
\end{table*}

We first use GOBNILP\_dev to find the optimal scores since GOBNILP\_dev takes objective limit ($\opt + \epsilon$) for enumerating feasible networks. Then all networks falling into the limit are collected with a counting limit of 150,000. Finally the collected networks are categorized into Markov equivalence classes (MECs), where two networks belong to the same MEC iff they have the same skeleton and v-structures \cite{VermaP1990}. The proposed approach is tested on datasets with up to 57 variables. The search time $T$, the number of collected networks $|\graphset|$ and the number of MECs $\mathcal{M}$ in the collected networks at BF = 3, 20 and 150 using BIC are reported in Table~\ref{TAB:bf}, where $n$ is the number of random variables in the dataset and $N$ is the number of instances in the dataset. The three thresholds are chosen according to the interpreting scale suggested by \cite{HeckermanGC95} where 3 marks the difference between anecdotal and positive evidence, 20 marks positive and strong evidence and 150 marks strong and very strong evidence. The search time mostly depends on a combined effect of the size of the network, the sample size and the number of MECs at a given BF. Some fairly large networks such as alarm, sponge and barley are solved much faster than smaller networks with a large sample size, e.g., msnbc and letter.

The results also indicate that the number of collected networks and the number of MECs at three BF levels varies substantially across different datasets. In general, datasets with smaller sample sizes tend to have more networks collected at a given BF since near-optimal networks have similar posterior probabilities to the best network. Although the desired level of BF for a study, like the p-value, is often determined with domain knowledge, the proposed approach, given sufficient samples, will produce meaningful results that can be used for further analysis.

\subsection{Bayes Factor vs. $k$-Best}

In this section, we compare our approach with published solvers that are able to find a subset of top-scoring networks with the given parameter $k$. The solvers under consideration are KBest\_12b\footnote{\url{http://web.cs.iastate.edu/~jtian/Software/UAI-10/KBest.htm}} from \cite{TianHR10}, KBestEC\footnote{\url{http://web.cs.iastate.edu/~jtian/Software/AAAI-14-yetian/KBestEC.htm}} from \cite{ChenT2014}, and GOBNILP 1.6.3
\cite{BartlettC13}, referred to as KBest, KBestEC and GOBNILP below. The first two solvers are based on the dynamic programming approach introduced in \cite{SilanderM06}. 
Due to the lack of support for BIC in KBest and KBestEC, only BDeu with a equivalent sample size of one is used in corresponding experiments.

The most recent stable version of \gobnilp{} is 1.6.3 that works with SCIP 3.2.1. The default configuration is used and experiments are conducted for both BIC and BDeu scoring functions. However, the $k$-best results are omitted here due to its poor performance. Despite that \gobnilp{} can iteratively find the $k$-best networks in descending order by adding linear constraints, the pruning rules designed to find the best network are turned off to preserve sub-optimal networks. In fact, the memory usage often exceeded 64 GB during the initial ILP formulation, indicating that the lack of pruning rules posed serious challenge for GOBNILP. \gobnilp{\_dev}, on the other hand, can take advantage of the pruning rules presented above in the proposed BF approach and its results compare favorably to KBest and KBestEC.

The experimental results of KBest, KBestEC and \gobnilp{\_dev} are reported in Table~\ref{TAB:kbest}, where $n$ is the number of random variables in the dataset, $N$ is the number of instances in the dataset, and $k$ is the number of top scoring networks. The search time $T$ is reported for KBest, KBestEC and GOBNILP\_dev (BF = 20). The number of DAGs covered by the $k$ MECs $|\graphset_k|$ is reported for KBestEC. In comparison, the last two columns are the number of found networks $|\graphset_{20}|$ and the number of MECs $|\mathcal{M}_{20}|$ using the BF approach with a given BF of 20 and BDeu scoring function.

As the number of requested networks $k$ increases, the search time for both KBest and KBestEC grows exponentially. The KBest and KBestEC are designed to solve problems of size fewer than 20\footnote{Obtained through correspondence with the author.}, and so they have some difficulty with larger datasets. They also fail to generate correct scoring files for msnbc. KBestEC seems to successfully expand the coverage of DAGs with some overhead for checking equivalence classes. However, KBestEC took much longer than KBest for some instances, e.g., nltcs and letter, and the number of DAGs covered by the found MECs is inconsistent for nltcs, letter and zoo. The search time for the BF approach is improved over the $k$-best approach except for datasets with very large sample sizes. The generalized pruning rules are very effective in reducing the search space, which then allows GOBNILP\_dev to solve the ILP problem subsequently. Comparing to the improved results in (\citeauthor{ChenCD2015}, \citeyear{ChenCD2015}; \citeyear{ChenCD2016}), our approach can scale to larger networks if the scoring file can be generated.\footnote{We are unable to generate BDeu score files for datasets with over 30 variables.}

\begin{figure}[thb]
    \centering
    \includegraphics[width=\linewidth]{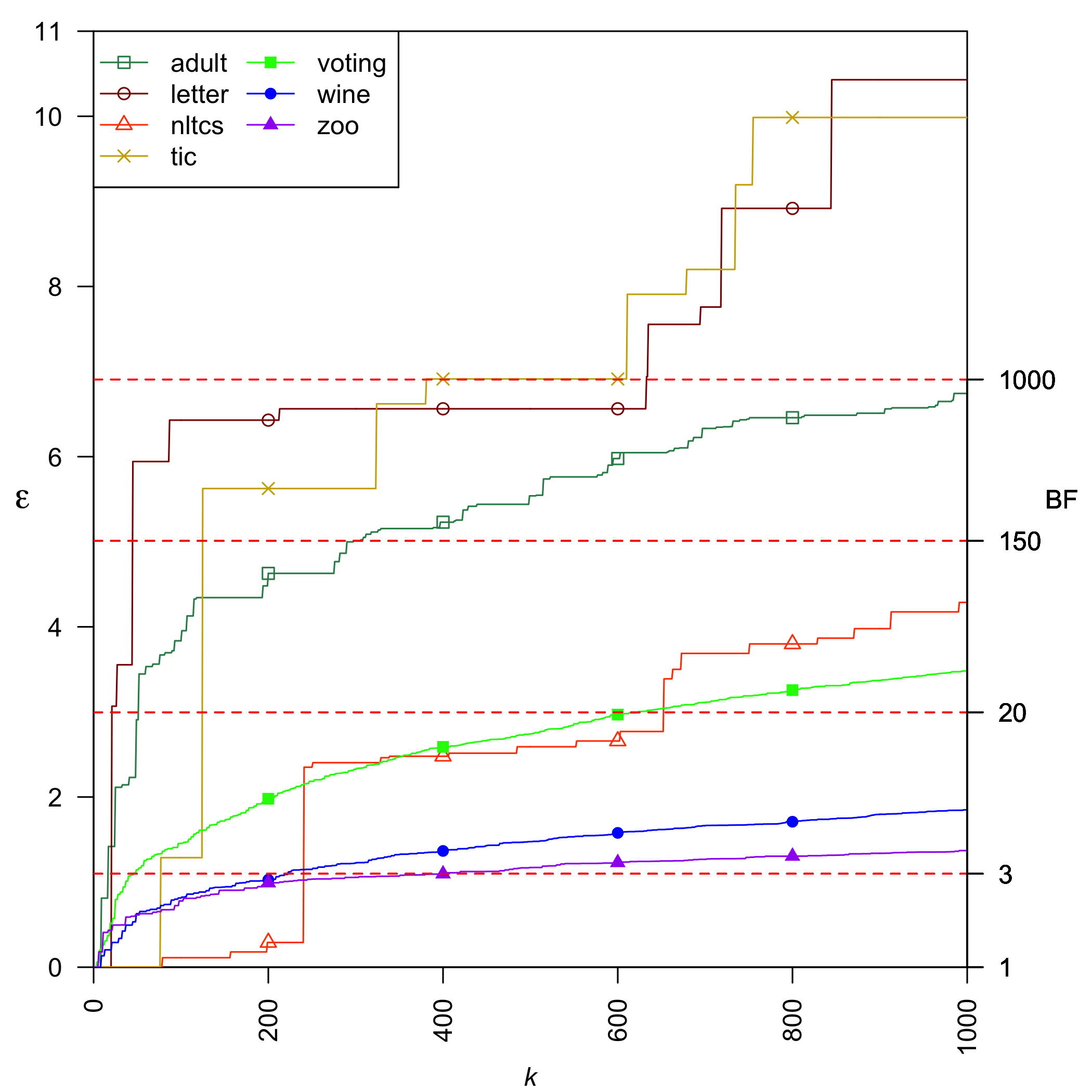}
    \caption{The deviation $\epsilon$ from the optimal BDeu score by $k$ using results from KBest. The corresponding values of the BF ($\epsilon=\log(BF)$, see Equation \ref{EQUATION:bf}) are presented on the right. For example, if the desired BF value is 20, then all networks falling below the dash line at 20 are credible.}\label{fig:diff}
\end{figure}

Now we show that different datasets have distinct score patterns in the top scoring networks. The scores of the 1,000-best networks for some datasets in the KBest experiment are plotted in Figure~\ref{fig:diff}. A specific line for a dataset indicates the deviation $\epsilon$ from the optimal BDeu score by the $k$th-best network. For reference, the red dash lines represent different levels of BFs calculated by $\epsilon=\log{BF}$ (See Equation~\ref{EQUATION:bf}). The figure shows that it is difficult to pick a value for $k$ \emph{a priori} to capture the appropriate set of top scoring networks. For a few datasets such as adult and letter, it only takes fewer than 50 networks to reach a BF of 20, whereas zoo needs more than 10,000 networks. The sample size has a significant effect on the number of networks at a given BF since the lack of data leads to many BNs with similar probabilities. It would be reasonable to choose a large value for $k$ in model averaging when data is scarce and vice versa, but only the BF approach is able to automatically find the appropriate and credible set of networks for further analysis.

\section{Conclusion}\label{sec:conclusion}

Existing approaches for model averaging for Bayesian network structure learning
either severely restrict the structure of the Bayesian network
or have only been shown to scale to networks with fewer than 30 random
variables. In this paper, we proposed a novel approach to model
averaging inspired by performance guarantees in approximation
algorithms that considers all networks within a factor of optimal. 
Our approach has two primary advantages. First,
our approach only considers \emph{credible} models in that they
are optimal or near-optimal in score. Second, our approach
is significantly more efficient and scales to much larger
Bayesian networks than existing approaches.
We modified GOBNILP \cite{BartlettC13}, a state-of-the-art method for finding
an optimal Bayesian network, to implement our generalized pruning rules
and to find all \emph{near}-optimal networks. Our experimental results demonstrate
that the modified GOBNILP scales to significantly
larger networks without resorting to restricting the structure 
of the Bayesian networks that are learned.

\bibliographystyle{aaai}
\bibliography{csp,probabilistic,scip}

\newpage
\clearpage

\appendix
\section{\textit{Supplemental Material}\\Proofs of Pruning Rules}

We discuss the original pruning rules and prove their generalization below. A candidate parent set $\Pi_i$ can be \textit{safely pruned} given a non-negative constant $\epsilon \in \mathbb{R}^+$ if $\Pi_i$ cannot be the parent set of $V_i$ in any network in the set of credible networks. Note that proofs of the original rules can be obtained by setting $\epsilon=0$.

\subsection{Proof of Lemma \ref{lem:scoreprune}}

\citeauthor{TeyssierK05}~\shortcite{TeyssierK05} give a pruning rule that is applicable for all decomposable scoring functions.
\begin{customthm}{1A}\cite{TeyssierK05}
	Given a vertex variable $\vertex{j}$, and candidate parent sets
	$\Pi_j$ and  $\Pi_j^{\prime}$, if $\Pi_j \subset \Pi_j^{\prime}$ and $\score{}{\Pi_j} \leq \score{}{\Pi_j^{\prime}}$,
	$\Pi_j^{\prime}$ can be safely pruned. \label{thm:basicrule}
\end{customthm}


Let us relax this pruning rule.
	\begin{lemma}
	Given a vertex variable $\vertex{j}$, candidate parent sets
	$\Pi_j$ and  $\Pi_j^{\prime}$, and some $\epsilon\in \mathbb{R}^+$,  if $\Pi_j \subset \Pi_j^{\prime}$ and $\score{}{\Pi_j} + \epsilon \geq \score{}{\Pi_j^{\prime}}$,
	$\Pi_j^{\prime}$ can be safely pruned. 
	\end{lemma}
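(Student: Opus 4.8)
The plan is to establish safe pruning directly from its definition: I will show that $\Pi_j'$ cannot serve as the parent set of $V_j$ in any credible network, so that deleting it from the candidate list of $V_j$ loses no credible network. The only machinery required is the decomposability of $\sigma$ together with the fact that deleting edges from a DAG keeps it acyclic; the subset hypothesis $\Pi_j \subset \Pi_j'$ is exactly what makes both of these available at once, and it is the engine of the whole argument.

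First I would take an arbitrary network $G'$ in which $V_j$ has parent set $\Pi_j'$ and build a companion DAG $G$ by deleting the incoming edges to $V_j$ that come from $\Pi_j' \setminus \Pi_j$, so that in $G$ the parent set of $V_j$ is exactly $\Pi_j$ while every other family is left untouched. Because $\Pi_j \subset \Pi_j'$ this operation only removes edges, so $G$ is still a DAG and a legal BN, which gives the lower bound $\score{}{G} \ge \opt$. Decomposability then collapses all the unchanged local scores and leaves the single identity
\[
\score{}{G'} = \score{}{G} + \bigl(\score{}{\Pi_j'} - \score{}{\Pi_j}\bigr).
\]
Combining this with $\score{}{G} \ge \opt$ yields $\score{}{G'} \ge \opt + \bigl(\score{}{\Pi_j'} - \score{}{\Pi_j}\bigr)$, which reduces the entire question to controlling the local-score gap $\score{}{\Pi_j'} - \score{}{\Pi_j}$.

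The decisive step, and the one I expect to be the main obstacle, is to feed the hypothesis relating $\score{}{\Pi_j}$, $\score{}{\Pi_j'}$ and $\epsilon$ into this lower bound and argue that it pushes $\score{}{G'}$ outside the credible band $[\opt,\ \opt+\epsilon]$, contradicting the assumed credibility of $G'$. This is precisely where the direction and strictness of the local-score inequality must be tracked with care: the credibility slack $\epsilon$ has to be shown to be entirely consumed by the gap $\score{}{\Pi_j'} - \score{}{\Pi_j}$, so that no network resting on $\Pi_j'$ can stay within $\opt + \epsilon$. Once that inequality bookkeeping is pinned down, safe prunability of $\Pi_j'$ follows at once, and iterating the same edge-deletion down to $\Pi_j$ on any larger parent set extends the conclusion to the supersets of $\Pi_j'$ as well. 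As a final consistency check I would verify the promised specialization at $\epsilon = 0$ against the original Teyssier--Koller rule of Theorem 1A, since that is where any sign error in the local-score comparison would surface.
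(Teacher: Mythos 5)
Your construction is the paper's own proof, carried out more carefully: the paper likewise takes $\graph$ and $\graph'$ identical except at $\vertex{j}$, cancels the unchanged local scores by decomposability, and concludes $\graph'$ is not credible; it never remarks that deleting the edges from $\Pi_j^{\prime}\setminus\Pi_j$ preserves acyclicity, nor does it state the bound $\score{}{\graph}\geq\opt$ explicitly, both of which you supply. However, the one step you deliberately deferred is a genuine problem, and your suspicion of a sign error is exactly right. The hypothesis as printed, $\score{}{\Pi_j}+\epsilon\geq\score{}{\Pi_j^{\prime}}$, says the local gap $\score{}{\Pi_j^{\prime}}-\score{}{\Pi_j}$ is \emph{at most} $\epsilon$, so your lower bound $\score{}{\graph'}\geq\opt+(\score{}{\Pi_j^{\prime}}-\score{}{\Pi_j})$ can never push $\score{}{\graph'}$ above $\opt+\epsilon$: the hypothesis caps the gap in the wrong direction, and the lemma as stated is false. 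Concretely, with two variables let $\Pi_2=\emptyset$, $\Pi_2^{\prime}=\{V_1\}$ and $\score{}{\Pi_2^{\prime}}=\score{}{\Pi_2}+\delta$ with $0\leq\delta\leq\epsilon$; the hypothesis holds, yet if the empty graph is optimal then the network with edge $V_1\rightarrow V_2$ has score $\opt+\delta\leq\opt+\epsilon$ and is credible, so pruning $\Pi_2^{\prime}$ loses a credible network. Your planned $\epsilon=0$ consistency check surfaces the same defect: the printed condition degenerates to $\score{}{\Pi_j}\geq\score{}{\Pi_j^{\prime}}$, the reverse of the Teyssier--Koller rule in Theorem 1A.

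The intended hypothesis is the flipped, strict one, $\score{}{\Pi_j}+\epsilon<\score{}{\Pi_j^{\prime}}$, which is the form in which the lemma is actually invoked later (the proof of Theorem 2 derives $\score{}{\Pi_i^{\prime}}>\score{}{\Pi_i}+\epsilon$ and cites Lemma 1, and Theorem 6 is analogous). Under that hypothesis your machinery closes in one line: $\score{}{\graph'}=\score{}{\graph}+(\score{}{\Pi_j^{\prime}}-\score{}{\Pi_j})>\score{}{\graph}+\epsilon\geq\opt+\epsilon$, so $\graph'$ falls outside the credible band of Definition 1. Note that the paper's own proof carries the same confusion you detected: as typeset, its middle step uses the inequality in the direction $\score{}{\Pi_j}\leq\score{}{\Pi_j^{\prime}}+\epsilon$ and its final line equates $\score{}{\Pi_j^{\prime}}+\epsilon+\sum_{i\neq j}\score{}{\Pi_i}$ with $\score{}{\graph'}$, mislaying the $\epsilon$; what it literally establishes is $\score{}{\graph}\leq\score{}{\graph'}+\epsilon$, from which non-credibility of $\graph'$ does not follow. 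Two last bookkeeping points: strictness matters, since a network scoring exactly $\opt+\epsilon$ is credible and the non-strict variant could prune it; and your closing remark about iterating edge deletion to supersets is sound but unnecessary here, as this lemma claims pruning only of $\Pi_j^{\prime}$ itself --- the superset extensions are handled separately in Theorems 2, 5 and 6 via monotonicity of the penalty and count terms.
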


	\begin{proof}
Consider networks $\graph$ and $\graph'$ that are the same except for the parent set of $\vertex{j}$, where $\graph$ has the parent set $\Pi_j$ for $\vertex{j}$ and $\graph'$ has the parent set $\Pi_j^{\prime}$ for $\vertex{j}$. 
\begin{align*}
    \score{}{\graph} &=  \score{}{\Pi_j} + \sum_{i\neq j} \score{}{\Pi_j}  &[\score{ }{}\text{ is decomposable}]\\
                    &\leq\score{}{\Pi_j^{\prime}} + \epsilon+ \sum_{i\neq j} \score{}{\Pi_j}  &[\text{given}]\\
                    &=\score{}{\graph'}.
\end{align*}
Thus, $\graph'$ cannot be in the set of credible  networks.
	\end{proof}
	
\subsection{Proof of Theorem \ref{thm:decampospar}}

An additional pruning rule can be derived from Theorem \ref{thm:basicrule} that is applicable to the BIC/MDL scoring function. 

\begin{customthm}{2A}\cite{CamposJ11}
Given a vertex variable $\vertex{i}$, and candidate parent sets $\Pi_i$ and $\Pi_i^{\prime}$,
	if $\Pi_i \subset \Pi_i^{\prime}$ and $\score{}{\Pi_i} -  \pen{}{\Pi_i^{\prime}} < 0$,
	$\Pi_i^{\prime}$ and all supersets of $\Pi_i^{\prime}$ can be safely pruned  if $\sigma$ is the BIC/MDL scoring function.    
\end{customthm}


Here, $\pen{}{\Pi_i^{\prime}}$ is the penalty term in the BIC scoring function. This pruning rule can also be relaxed. 
\begin{customthm}{2}
Given a vertex variable $\vertex{i}$, candidate parent sets
	$\Pi_i$ and $\Pi_i^{\prime}$, and some $\epsilon \in \mathbb{R}^+$,
	if $\Pi_i \subset \Pi_i^{\prime}$ and $\score{}{\Pi_i} -  \pen{}{\Pi_i^{\prime}} + \epsilon < 0$,
	$\Pi_i^{\prime}$ and all supersets of $\Pi_i^{\prime}$ can be safely pruned if $\sigma$ is the BIC scoring function. \label{thm:decampospar}
\end{customthm}

\begin{proof}
    \begin{align*}
       &\score{}{\Pi_i} -  \pen{}{\Pi_i^{\prime}}  + \epsilon < 0  &[\text{given}]\\
       &\Rightarrow -\score{}{\Pi_i} + \pen{}{\Pi_i^{\prime}} -\epsilon > 0\\
       &\Rightarrow -\score{}{\Pi_i} + \pen{}{\Pi_i^{\prime}} - L(\Pi_i^{\prime})  - \epsilon > 0 &[L(\Pi_i^{\prime}) < 0]\\
       &\Rightarrow \score{}{\Pi_i^{\prime}} > \score{}{\Pi_i} + \epsilon.
    \end{align*}
    By Lemma \ref{lem:scoreprune}, $\Pi_i^{\prime}$ cannot be an optimal parent set. Using the fact that penalties increase with increase in parent set size, supersets of $\Pi_i^{\prime}$ cannot be in the set of credible  networks. The result follows.
\end{proof}

\subsection{Proof of Theorem \ref{THEOREM:decamposbicrelaxed}}


\begin{customthm}{3A}
\cite{CamposJ11}
	 Given a vertex variable $V_i$ and candidate parent set $\Pi_i$ such that $r_{\Pi_i}> \frac{N}{w} \frac{\log r_i}{r_i -1}$,  if $\parents_i \subsetneq \parents_i'$ , then $\parents_i'$ can be safely pruned if $\sigma$ is the BIC scoring function. \label{THEOREM:decamposbic}
\end{customthm}

\begin{customcor}{3B}
\cite{CamposJ11}
	Given a vertex variable $\vertex{i}$ and candidate parent set 
	$\Pi_i$, if $\Pi_i$ has more than $\log_2 N$ elements,
	$\Pi_i$ can be safely pruned if $\sigma$ is the BIC scoring function. \label{cor:decampossize}
\end{customcor}

	
Let us relax the pruning rule given in Theorem \ref{THEOREM:decamposbic}.

\begin{customthm}{3}
	Given a vertex variable $V_i$, and a candidate parent set $\Pi_i$ such that $r_{\Pi_i}> \frac{N}{w} \frac{\log r_i}{r_i -1} + \epsilon$ for some $\epsilon \in \mathbb{R}^+$,  if $\parents_i \subsetneq \parents_i'$ , then $\parents_i'$ can be safely pruned if $\sigma$ is the BIC scoring function. 
\end{customthm}

\begin{proof}
\begin{align*}
    &\sigma(\Pi_i') - \sigma(\Pi_i)\\
    & \stackbin[]{0}{=} -\displaystyle \max_{\theta_i} L(\Pi_i') + t(\Pi_i')\cdot w + \displaystyle \max_{\theta_i} L(\Pi_i) - t(\Pi_i)\cdot w \\
    & \stackbin[]{1}{\geq} \displaystyle -\max_{\theta_i} L(\Pi_i) + t(\Pi_i')\cdot w  - t(\Pi_i)\cdot w \\
    &\stackbin[]{2}{=} \displaystyle -\sum_{j=1}^{r_{\Pi_i}}n_{ij}(-\sum^{r_i}_{i=1}\frac{n_{ijk}}{n_{ij}}\log \frac{n_{ijk}}{n_{ij}} )+ t(\Pi_i')\cdot w - t(\Pi_i)\cdot w \\
    & \stackbin[]{3}{\geq} \displaystyle -\sum_{j=1}^{r_{\Pi_i}}n_{ij}H(\theta_{ij})- t(\Pi_i')\cdot w  + t(\Pi_i)\cdot w \\
    & \stackbin[]{4}{\geq}  \displaystyle -\sum_{j=1}^{r_{\Pi_i}}n_{ij}\log r_i + r_{\Pi_i}\cdot(r_e -1)\cdot (r_i -1)\cdot w \\
    & \stackbin[]{5}{\geq}  \displaystyle -\sum_{j=1}^{r_{\Pi_i}}n_{ij}\log r_i + r_{\Pi_i}\cdot (r_i -1)\cdot w \\ 
    & \stackbin[]{6}{=}  -N\log r_i + r_{\Pi_i}\cdot (r_i -1)\cdot w\\
    & \stackbin[]{7}{>} \epsilon.
    \end{align*}
    Step 0  uses the definition of $BIC$. Step 1 uses $ \max_{\theta_i} L(\Pi_i')$ is negative. Step 2 uses the fact that the maximum likelihood estimate, $\theta^\ast_{ijk} = \frac{n_{ijk}}{n_{ij}}$ and $n_{ij} = \sum_{i=1}^{r_{i}}n_{ijk}$. Step 3 uses the definition of entropy. Step 4 uses the definition of the penalty function $t$. Step 5 uses $r_e \geq 2$.  Finally, the RHS in Step 6 follows because of the definition of $n_{ij}$. Step 7 uses the assumption of the theorem. 
    
    Using Lemma \ref{lem:scoreprune}, we get the result as desired.  
	\end{proof}
	
\subsection{Proof of Corollary \ref{cor:decampossizerelaxed}}

Using Theorem \ref{THEOREM:decamposbicrelaxed}, we can get a relaxed version of Corollary \ref{cor:decampossize}.




\begin{customcor}{4}
	Given a vertex variable $\vertex{i}$ and candidate parent set 
	$\Pi_i$, if $\Pi_i$ has more than $\ceil{\log_2 N + \epsilon}$ elements, for some $\epsilon \in \mathbb{R}^+$, $\Pi_i$ can be safely pruned if $\sigma$ is the BIC scoring function.
\end{customcor}

	\begin{proof}
	Assuming $N>4$, take a variable $V_i$ and a parent set $\Pi_i$
with $|\Pi_i| = \ceil{\log_2 N + \epsilon}$ elements.  Because every variable has at least two states, we know that $r_{\Pi_i} \geq 2^{|\Pi_i|}  \geq N +\epsilon > \frac{N}{w}\frac{\log r_i}{r_i -1}+ \epsilon$, because $w =\log \frac{N}{2}$ gives us $\frac{\log r_i}{w(r_i -1)} < 1$ , and by Theorem \ref{THEOREM:decamposbicrelaxed} we know that no proper superset of $\Pi_i$ can be an optimal parent set for $\vertex{i}$ as desired.
	\end{proof} 

\subsection{Proof of Theorem \ref{thm:entropyrelaxed}}

\begin{customlem}{5A}\cite{Campos2017}
Given a vertex variable $V_i$, and candidate parent sets $\Pi_i$, $\Pi_i'$ such that $\Pi_i = \Pi_i' \cup \{V_j\}$ for some variable $V_j \notin \Pi_i'$, we have  $L(\Pi_i) - L(\Pi_i') \leq  N \cdot  \min\{H(V_i \mid \Pi_i'), H(V_j  \mid \Pi_i')\}$. \label{lem:entropy1}
\end{customlem}

\begin{proof}
First, consider the definition of $L_{i}(\Pi_i)$,
\begin{equation*}
    L(\Pi_i) = \sum_{j=1}^{r_{\Pi_i}}\sum_{k=1}^{r_i} n_{ijk} \log {\theta}_{ijk}^{\ast} ,
\end{equation*}
where ${\theta}_{ijk}^{\ast}$ is the maximum likelihood estimate. This gives us $N\cdot H(V_i \mid \Pi_i) = -L(\Pi_i)$. Thus, we get,
\begin{align*}
    L(\Pi_i) - L(\Pi_i') &= N\cdot (H(V_i \mid  \Pi_i') - H(V_i \mid  \Pi_i))\\
                            &\stackbin[]{1}{\leq} N\cdot H(V_i \mid  \Pi_i') .
\end{align*}
We use the fact that entropy is positive. Now, consider the definition of mutual information,
\begin{align*}
    I(X,Y \mid Z) = H(X \mid Z) = H(X \mid Y \cup Z).
\end{align*}
This gives us,
\begin{align*}
    L(\Pi_i) - L(\Pi_i') &= N \cdot I(V_i, V_j  \mid  \Pi_i')\\
                        & \stackbin[]{2}{=} N\cdot (H(V_j \mid  \Pi_i') - H(V_j \mid \Pi_i' \cup \{V_i\}))\\
    \Rightarrow L(\Pi_i) - L(\Pi_i') &\stackbin[]{3}{\leq} N \cdot  \min\{H(V_i \mid \Pi_i'), H(V_j  \mid \Pi_i')\}.
\end{align*}
Step 3 combines Steps 1 and 2. The result follows as desired.
\end{proof}

\begin{customthm}{5B}
\cite{Campos2017}
Given a vertex variable $V_i$, and candidate parent set $\Pi_i$, let $V_j \notin \Pi_i$  such that $N \cdot \min \{H(V_i \mid  \Pi_i), H(V_j  \mid \Pi_i)\} \geq (1 - r_{j}) \cdot t(\Pi_i)$. Then the candidate parent set $\Pi_i' = \Pi_i \cup \{V_j \}$ and all its supersets can be safely pruned if $\sigma$ is the BIC scoring function. 
\end{customthm}

We can relax this rule.

\begin{customthm}{5}
Given a vertex variable $V_i$, and candidate parent set $\Pi_i$, let $V_j \notin \Pi_i$  such that $N \cdot \min \{H(V_i \mid  \Pi_i), H(V_j  \mid \Pi_i)\} \geq (1 - r_{j}) \cdot t(\Pi_i) +\epsilon$ for some $\epsilon \in \mathbb{R}^+$ . Then the candidate parent set $\Pi_i' = \Pi_i \cup \{V_j \}$ and all its supersets can be safely pruned if $\sigma$ is the BIC scoring function. \label{thm:entropyrelaxed}
\end{customthm}

\begin{proof}
\begin{align*}
    \score{}{\Pi_i'} 
                    &\stackbin[]{0}{=} -L(\Pi_i')  +  t(\Pi_i')\\
                    &\stackbin[]{1}{\geq} -L(\Pi_i) - N \cdot \min\{H(V_i \mid \Pi_i); H(V_j \mid \Pi_i)\} + t(\Pi_i')\\
                    &\stackbin[]{2}{\geq} -L(\Pi_i) + (1 - r_j)\cdot t(\Pi_i) +\epsilon  +t(\Pi_i') \\
                    &\stackbin[]{3}{=} -L(\Pi_i) + t(\Pi_i) - r_j \cdot t(\Pi_i) + \epsilon + t(\Pi_i') \\
                    &\stackbin[]{4}{=} -L(\Pi_i) + t(\Pi_i) - r_j \cdot r_{\Pi_i} \cdot (r_i -1) + \epsilon + t(\Pi_i') \\
                    &\stackbin[]{5}{=} -L(\Pi_i) + t(\Pi_i) - t(\Pi_i') + \epsilon + t(\Pi_i') \\
                    &\stackbin[]{6}{=} \score{}{\Pi_i} + \epsilon.
\end{align*}
Step 1 uses Lemma \ref{lem:entropy1}. Step 2 uses the assumptions of the question. Step 4 uses the definition of $t$. Step 5 uses $\Pi_i' = \Pi_i \cup \{V_j \}$. Using Lemma \ref{lem:scoreprune}, the result follows as desired.
\end{proof}

\subsection{Proof of Theorem \ref{thm:jamesrelaxed}}

\begin{customlem}{6A}
Let $n_{ij}$ be a positive integer and $\alpha'$ be a positive real number. Then
\begin{equation*}
    \log \frac{\Gamma(n_{ij} + \alpha')}{\Gamma(\alpha')} = \sum_{i=0}^{n_{ij -1}}\log (i + \alpha')
\end{equation*} \label{lem:gamma}
\end{customlem}

\begin{proof}
We start with the property that $\Gamma(x+1) = x\Gamma(x)$ for any positive real number $x$. As $\alpha' > 0$, this gives us,
\begin{align*}
    \frac{\Gamma(1 + \alpha')}{\Gamma(\alpha') } &\stackbin[]{0}{=} \alpha' \\
     \frac{\Gamma(2 + \alpha')}{ \Gamma(1+ \alpha')} &\stackbin[]{1}{=} (1+ \alpha') \\
    \Rightarrow    \frac{\Gamma(1 + \alpha')\cdot\Gamma(2 + \alpha')}{ \Gamma(1+ \alpha')\Gamma(\alpha') }  &\stackbin[]{2}{=} \alpha' (1+ \alpha')\\
    \Rightarrow  \frac{\Gamma(1 + \alpha')\cdots\Gamma(n_{ij
    }+ \alpha')}{ \Gamma(n_{ij} -1 + \alpha')\cdots\Gamma(\alpha')} &\stackbin[]{3}{=} \alpha'\cdots (n_{ij} -1 + \alpha') \\
    \Rightarrow  \frac{\Gamma(n_{ij} + \alpha') }{ \Gamma(\alpha')} &\stackbin[]{4}{=} \alpha'\cdots (n_{ij} -1 + \alpha')\\
    \Rightarrow \log \frac{\Gamma(n_{ij}+ \alpha')}{\Gamma(\alpha')} &\stackbin[]{5}{=} \sum_{i=0}^{n_{ij -1}}\log (i + \alpha').
\end{align*}
Step 1 uses $1 + \alpha'$. Step 2 follows by multiplication of the equations in Step 1 and Step 0. Step 3 follows by repeated application of the identity. Step 4 cancels identical terms in the LHS. The result follows as desired.
\end{proof}

\begin{customlem}{6B}
Let $\{n_{ijk}\}_{k=1,...r_{i}}$ be non-negative integers with a positive sum, $n_{ij} = \sum_{k=1}^{r_i} n_{ijk}$ and $\alpha''$ be a positive real number. Then
\begin{equation*}
    \sum_{k=1}^{r_i} \log \frac{\Gamma(n_{ijk} + \alpha'')}{\Gamma(\alpha'')} \leq \log \frac{\Gamma(n_{ij} + \alpha'')}{\Gamma(\alpha'')}
\end{equation*} \label{lem:gamma2}
\end{customlem}

\begin{proof}
Consider allocation of $\{n_{ijk}\}_{k=1,...,{r_{i}}}$  items over the $r_i$ bins. There are two cases.
\begin{itemize}
    \item Let there be some index $k^\ast$ such that $n_{ijk^{\ast}} = n_{ij}$. This means that $n_{ijk}=0$ for all $k\neq k^\ast$. It follows that $\sum_{k=1}^{r_i} \log \frac{\Gamma(n_{ijk} + \alpha'')}{\Gamma(\alpha'')} = \log \frac{\Gamma(n_{ij} + \alpha'')}{\Gamma(\alpha'')}$.
    \item Let there be two indices $k_1$ and $k_2$ such that $n_{ijk_{1}} > 0$ and $n_{ijk_{2}} > 0$. Without loss of generality, we can assume that $n_{ijk_{1}} \geq n_{ijk_{2}}$. We move one item from bin $k_1$ to bin $k_2$. The sum $n_{ij}$ remains constant. By Lemma \ref{lem:gamma}, an increase in the RHS by $\log (n_{ijk_{1}}  + \alpha'') - \log (n_{ijk_{2}}  -1+ \alpha'')$, results in a corresponding increase in the LHS. Note that the assumption $n_{ijk_{1}} \geq n_{ijk_{2}}$ means that this increase is positive. 
     By increasing counts at the expense of small counts in  this way a sequence of distributions of the fixed sum $n_{ij}$ over the $r_i$ bins can be constructed for which the LHS of Lemma \ref{lem:gamma2} is increasing. The sequence terminates when $n_{ijk^{\ast}} = n_{ij}$ for some $k^\ast$. The result follows.
\end{itemize} 

\end{proof}

\begin{customthm}{6C}
\cite{cussens2015gobnilp}
\begin{align*}
     &\sum_{j=1}^{r_{\Pi_i}} \Bigg( \frac{\Gamma( \alpha')}{\Gamma(n_{ij}+\alpha')} +  \sum_{k=1}^{r_i} \log \frac{\Gamma(n_{ijk} + \frac{\alpha'}{r_i})}{\Gamma( \frac{\alpha'}{r_i})}\Bigg) \\ 
     &\leq  \sum_{i=0, j:n_{ij}>0}^{n_{ij}} \log \Big( \frac{i+ a'/r_i}{i+\alpha}\Big).
\end{align*} \label{thm:gamma4}
\end{customthm}

\begin{proof}
 \begin{align*}
     &\sum_{j=1}^{r_{\Pi_i}} \Bigg( \log \frac{\Gamma( \alpha')}{\Gamma(n_{ij}+\alpha')} +  \sum_{k=1}^{r_i} \log \frac{\Gamma(n_{ijk} + \frac{\alpha'}{r_i})}{\Gamma( \frac{\alpha'}{r_i})}\Bigg)\\
      &\stackbin[]{1}{\leq} \sum_{j=1}^{r_{\Pi_i}}  \Bigg(\log \frac{\Gamma( \alpha')}{\Gamma(n_{ij}+ \alpha')} + \log \frac{\Gamma(n_{ij} + \frac{\alpha'}{r_i})}{\Gamma( \frac{\alpha'}{r_i})}\Bigg) \\
     &\stackbin[]{2}{\leq} \sum_{j=1}^{r_{\Pi_i}}  \Bigg(\log \frac{\Gamma( \alpha')}{\Gamma(n_{ij}+ \alpha')} \frac{\Gamma(n_{ij} + \frac{\alpha'}{r_i})}{\Gamma( \frac{\alpha'}{r_i})}\Bigg) \\
     & \stackbin[]{3}{\leq} \sum_{i=0, j:n_{ij}>0}^{n_{ij}-1} \Big( \log  \frac{i+ a'/r_i}{i+\alpha'}\Big)  \\
     &\stackbin[]{4}{\leq} \sum_{i=0, j:n_{ij}>0}^{n_{ij}} \log \Big( \frac{i+ a'/r_i}{i+\alpha'}\Big).
\end{align*}

Step 1 uses Lemma \ref{lem:gamma2}. Step 2 assumes $n_{ij} > 0$, and uses properties of the logarithm function. Step 3 uses Lemma \ref{lem:gamma}. The result follows as desired.
\end{proof}
\begin{customcor}{6D}
\cite{cussens2015gobnilp} Given that $r_i^{+} := |\{j: n_{ij} > 0\}|$, then 
\begin{equation*}
         \sum_{j=1}^{r_{\Pi_i}} \log \frac{\Gamma( \alpha')}{\Gamma(n_{ij} + \alpha')} +  \sum_{k=1}^{r_i} \log \frac{\Gamma(n_{ijk} + \frac{\alpha'}{r_i})}{\Gamma( \frac{\alpha'}{r_i})}  \leq -r_i^{+} \log r_i.
\end{equation*}
\label{cor:gamma3}
\end{customcor}

\begin{proof}
 If $n_{ij} > 0$, then 
 \begin{align*}
     \sum_{i=0}^{n_{ij}} \log \Big( \frac{i+ a'/r_i}{i+\alpha'}\Big) &= - \log r_i \sum_{i=1}^{n_{ij}} \log \Big( \frac{i+ a'/r_i}{i+\alpha'}\Big)\\
     &\leq -\log r_i.
 \end{align*}
Note that as $r_i \geq 2$, and $\alpha' > 0$, it is clear that $i + \alpha'/r_i < i+ \alpha'$. This means that each term in $\sum_{i=1}^{n_{ij}} \log \Big( \frac{i+ a'/r_i}{i+\alpha'}\Big)$ is negative.  This gives us the second inequality. The result then follows from Theorem \ref{thm:gamma4} as desired.
\end{proof}
\begin{customcor}{6E}
\cite{cussens2015gobnilp}
Given a vertex variable $V_i$ and candidate parent sets $\Pi_i$ and $\Pi_i'$
such that $\Pi_i \subset \Pi_i'$ and $\Pi_i \neq \Pi_i'$, let
$r_i^{+}(\Pi_i')$ be the number of positive counts in the contingency table for $\Pi_i'$. If $\score{}{\Pi_i} < r_i^{+}(\Pi_i') \log r_i$ then $\Pi_i'$ and the supersets of $\Pi_i'$ can be safely pruned.
\end{customcor}

 
 The corollary can be generalized to a new pruning rule.
\begin{customthm}{6}
Given a vertex variable $V_i$ and candidate parent sets $\Pi_i$ and $\Pi_i'$ such that $\Pi_i \subset \Pi_i'$ and $\Pi_i \neq \Pi_i'$, let
$r_i^{+}(\Pi_i')$ be the number of positive counts in the contingency table for $\Pi_i'$. If $\score{}{\Pi_i} + \epsilon < r_i^{+}(\Pi_i') \log r_i$, for some $\epsilon \in \mathbb{R^+}$ then $\Pi_i'$ and the supersets of $\Pi_i'$ can be safely pruned.
\label{thm:jamesrelaxed}
\end{customthm}

\begin{proof}
 Let $G'$ be a Bayesian network where $\Pi_i'$ or one of its supersets is a parent set for $V_i$.  Let $G$ be another Bayesian network where $\Pi_i$ is the parent set for $V_i$. 
 
Consider the LHS of Corollary \ref{cor:gamma3}. It is the local BDeu score for a parent set $\Pi_i'$ which has $r_{\Pi_i}$ counts $n_{ij}$ in its contingency table and counts $n_{ijk}$ in the contingency table for $\Pi_i' \cup \{V_i\}$, where $\alpha' = \alpha/r_{\Pi_i}$ for some ESS $\alpha$. If $r_i^{+}$($\Pi_i') \log r_i > \score{}{\Pi_i} + \epsilon$ then $\score{}{\Pi_i} + \epsilon$ is
lower than the local BDeu score for $\Pi_i'$ due to Corollary \ref{cor:gamma3}. Take a candidate parent set $\Pi_i''$. If $\Pi_i' \subset \Pi_i''$ then $r_i^{+}(\Pi_i'')  \leq r_i^{+}(\Pi_i')$ and so $r_i^{+}(\Pi_i'') \log r_i \leq r_i^{+}(\Pi_i') \log r_i$, as $r_i \geq 2$. From this
it follows that the local score for $\Pi_i''$ must also be more than $\score{}{\Pi_i} + \epsilon $. Using Lemma \ref{lem:scoreprune}, the result follows as desired.
\end{proof}

\end{document}